\def\eqref#1{equation~\ref{#1}}
\def\1{\bm{1}}
\def\rvc{{\mathbf{c}}}
\def\rvf{{\mathbf{f}}}
\def\rvx{{\mathbf{x}}}
\def\mI{{\bm{I}}}
\DeclareMathAlphabet{\mathsfit}{\encodingdefault}{\sfdefault}{m}{sl}
\SetMathAlphabet{\mathsfit}{bold}{\encodingdefault}{\sfdefault}{bx}{n}
\def\gN{{\mathcal{N}}}
\def\gU{{\mathcal{U}}}
\def\sD{{\mathbb{D}}}
\newcommand{\E}{\mathbb{E}}
\newcommand{\Ls}{\mathcal{L}}
\newcommand*\circled[1]{\tikz[baseline=(char.base)]{
            \node[shape=circle,draw,inner sep=2pt,scale=0.6] (char) {#1};}}
\newcommand{\redtext}[1]{\textcolor{red}{#1}}
\newcommand{\bluetext}[1]{\textcolor{blue}{#1}}
\renewcommand{\eqref}[1]{(\ref{#1})}
\newtheorem{theorem}{Theorem}
\newtheorem{lemma}{Lemma}
\title{ACCORD: Alleviating Concept Coupling through Dependence Regularization for Text-to-Image Diffusion Personalization}
\author{
 Shizhan Liu \\
  Ant Group\\
   \And
 Hao Zheng \\
  Ant Group\\
  \And
 Hang Yu\thanks{Corresponding author.} \\
  Ant Group\\
 \And
 Jianguo Li$^*$ \\
  Ant Group
}
\begin{document}
\maketitle
\begin{abstract}
Image personalization enables customizing Text-to-Image models with a few reference images but is plagued by "concept coupling"---the model creating spurious associations between a subject and its context. Existing methods tackle this indirectly, forcing a trade-off between personalization fidelity and text control. This paper is the first to formalize concept coupling as a statistical dependency problem, identifying two root causes: a Denoising Dependence Discrepancy that arises during the generative process, and a Prior Dependence Discrepancy within the learned concept itself. To address this, we introduce ACCORD, a framework with two targeted, plug-and-play regularization losses. The Denoising Decouple Loss minimizes dependency changes across denoising steps, while the Prior Decouple Loss aligns the concept's relational priors with those of its superclass. Extensive experiments across subject, style, and face personalization demonstrate that ACCORD achieves a superior balance between fidelity and text control, consistently improving upon existing methods.

\end{abstract}

\section{Introduction}

The advancement of Text-to-Image (T2I) Diffusion Models~\cite{ddpm,stable-diffusion} has lowered the barrier to generating high-quality and imaginative images from text prompts. However, pretrained T2I models often struggle to accurately produce personalized images, such as those depicting private pets or unique artistic styles. As a result, image personalization has gained significant attention, requiring users to provide several reference images related to the personalization target, which enables T2I models to create new images of the target based on text prompts.

The primary challenge of image personalization is ``concept coupling".
Due to the limited availability and low diversity of reference images for the personalization target (typically 3-6 images often in similar contexts), the model tends to confuse the target with other concepts that appear alongside it in these images. This entanglement hinders the model's ability to accurately control the attributes associated with the personalization target based on text. For example, as shown in Fig.~\ref{fig:concept_coupling}, the model may interpret ``a person carrying a backpack" as the primary focus, rather than ``backpack", because these elements frequently co-occur in the reference images. Consequently, the generated images often deviate from the intended text prompts, frequently including an unintended person in the output.

\begin{figure}[t]
    \centering
    \includegraphics[width=0.95\linewidth]{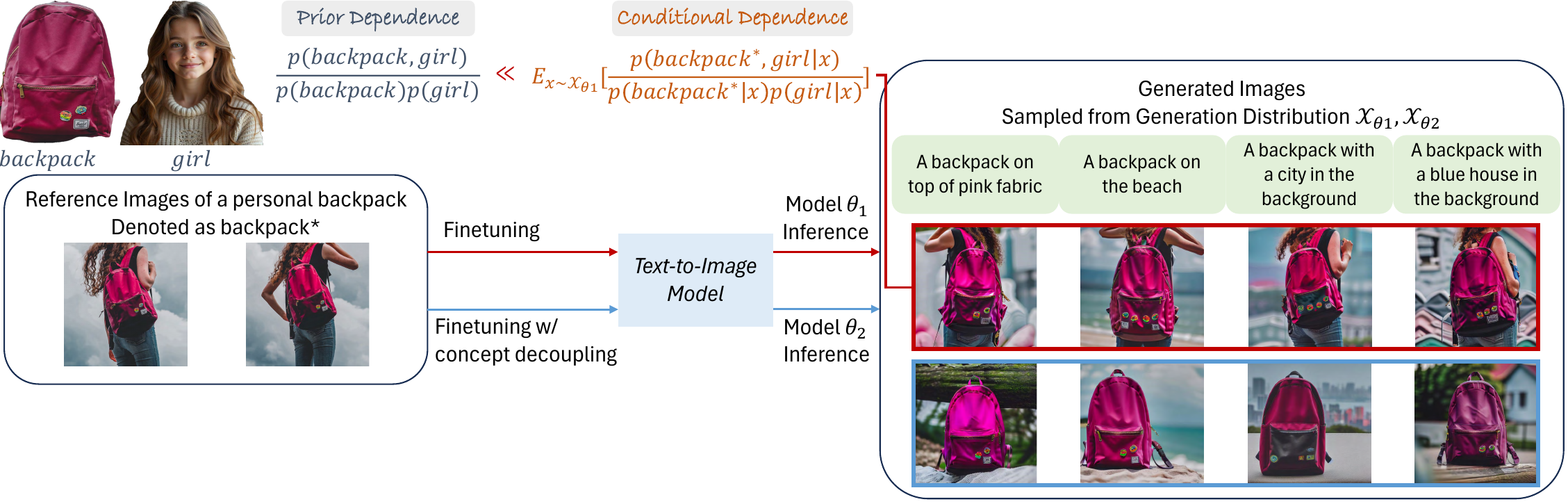}
    \caption{Illustration of the concept coupling problem. The target is a ``backpack*'', but reference images always pair it with a ``girl''. Standard finetuning incorrectly learns to bind these concepts, causing the model to generate the unwanted 'girl' and violate the text prompt.}

    \label{fig:concept_coupling}
\end{figure}

However, existing methods attempt to mitigate concept coupling through indirect and often heuristic means, fundamentally treating it as a symptom of overfitting rather than addressing its root cause. These approaches, while varied, are ultimately proxies. Open-source approaches fall into four main categories, each with fundamental limitations. Data regularization~\cite{dreambooth, custom-diffusion} uses superclass datasets to preserve model priors but risks distorting concept relationships. Weight regularization~\cite{svdiff, oft} constrains parameter updates to prevent overfitting, which can indiscriminately degrade fidelity. Loss regularization methods~\cite{facechain,harmonizing} introduce heuristic objectives that lack a direct link to the underlying statistical problem. Region-based methods~\cite{break-a-scene,disendiff} are confined to spatially separable objects and fail for global attributes like style. In addition, even powerful closed-source models like GPT-4o exhibit inconsistencies and artifacts stemming from this issue, as observed in recent empirical studies~\cite{empirical_gpt4oimage, gpt_imgeval}. By focusing on symptoms like parameter drift or feature entanglement, these approaches fail to directly model and minimize the unintended statistical dependencies that define concept coupling, leaving a critical gap for a more principled solution.

In this paper, we fill this gap by proposing a new paradigm: we are the first to formally frame concept coupling as a tractable statistical dependency problem. Our analysis reveals that this unwanted dependency originates from two distinct and measurable sources: a \textbf{Denoising Dependence Discrepancy} introduced during the generative process, and a \textbf{Prior Dependence Discrepancy} inherent in the learned personalized concept. This new formalism moves beyond heuristic fixes and allows us to directly diagnose and treat the problem at its core.

To operationalize this insight, we introduce ACCORD (\textbf{A}lleviating \textbf{C}oncept \textbf{CO}upling th\textbf{R}ough \textbf{D}ependence regularization), a plug-and-play framework with two targeted, theoretically-grounded regularization losses. The \textbf{Denoising Decouple Loss (DDLoss)} directly minimizes the dependency discrepancy that accumulates during the denoising process by leveraging the diffusion model as an implicit classifier. Complementing this, the \textbf{Prior Decouple Loss (PDLoss)} corrects the prior dependency of the learned concept by aligning its relationship with other concepts to that of its superclass in CLIP's semantic space. Together, these losses enable ACCORD to directly minimize concept coupling without relying on regularization datasets or overly restrictive weight constraints.
Experiments demonstrate that the proposed loss functions alleviate the concept coupling issue in image personalization more effectively, achieving a better balance between text control and personalization fidelity. Our contributions can be summarized as follows:
\begin{itemize}[leftmargin=*,itemsep= 0.2pt,topsep = 0pt,partopsep=0pt]
\item We are among the first to formally \textbf{formulate concept coupling in image personalization as a statistical problem of unintended dependencies} and propose ACCORD, a \textbf{plug-and-play} method that directly addresses concept coupling without requiring regularization datasets or extensive weight constraints.
\item We \textbf{identify two distinct sources of dependence discrepancies in concept coupling}: Denoising Dependence Discrepancy and Prior Dependence Discrepancy. To address these discrepancies, we propose Denoising Decouple Loss and Prior Decouple Loss, respectively.
\item Experimental results demonstrate the superiority of ACCORD in image personalization. Moreover, the proposed losses \textbf{prove effective in zero-shot conditional control tasks}, highlighting the general applicability of our decoupling principle beyond test-time finetuning.

\end{itemize}

\section{Related Works}
\textbf{Test-Time Finetuning-based Image Personalization}: Test-time fine-tuning, on which this paper mainly focuses, adapts pre-trained T2I models to reference images, offering flexible and balanced personalization at the cost of time and computation.

Existing test-time fine-tuning methods attempt to mitigate concept coupling through indirect means, which can be grouped into four main categories of proxy-based regularization, all of which treat the symptoms of the problem rather than its root cause: \textbf{Data regularization}~\cite{dreambooth,custom-diffusion} augments training with images of both the personalization target and its superclass. While intended to prevent overfitting, this approach is a blunt instrument; limited regularization dataset size and distribution gaps can hinder accurate modeling of concept relationships and reduce personalization fidelity. Although \cite{he2025data} use LLMs to design structured prompts for diverse regularization data to improve regularization effectiveness, this introduces LLM-induced concept dependencies that may not reflect their true prior relationships.
\textbf{Weight regularization} methods~\cite{textual-inversion,lora,svdiff,oft,PaRa} constrain parameter updates to prevent overfitting. For example, PaRa constrains the parameter space by reducing the dimensionality of the output matrix, thereby preventing overfitting. Yet, weight regularization can also diminish fidelity by indiscriminately restricting the model's capacity to learn target-specific details. \textbf{Loss regularization} approaches, like MagiCapture~\cite{magicapture} and Facechain-SuDe~\cite{facechain}, introduce objectives such as masked reconstruction or superclass inheritance to promote decoupling. However, their reliance on empirically chosen objectives means they lack a formal basis for why these heuristics should reduce the statistical dependencies at the core of concept coupling. \textbf{Region regularization} limit subjects to specific regions in the attention map~\cite{break-a-scene,disendiff,conceptexpress} or alternatively refines subject generation by constraining the cross attention map, as in Attend-and-Excite~\cite{attend-and-excite}. But this spatial proxy for conceptual separation is limited to spatially distinct subjects and struggles with global concepts like style or viewpoint. Perfusion~\cite{perfusion} further combines weight regularization and region regularization using gated rank-1 updates and key-locking. However, it still cannot theoretically constrain the statistical dependencies between concepts.

Unlike these proxy-based strategies that indirectly target symptoms like overfitting, our work is the first to directly model concept coupling as an excessive inter-concept dependency. We then introduce two targeted, dependency-regularization loss functions to principledly minimize it.

\noindent\textbf{Zero-shot Image Personalization}: Unlike test-time finetuning, zero-shot image personalization avoids test-time training but relies heavily on large-scale pretraining data. While recent closed-source models (e.g., GPT4o, Gemini 2.0) outperform open-source ones in zero-shot personalization~\cite{emu3,xiao2025omnigen}, they still face issues such as inconsistencies~\cite{gpt_imgeval} and copy-paste artifacts~\cite{empirical_gpt4oimage}. Most open-source models are limited to specific domains (e.g., faces, objects) and cannot fully address diverse personalization needs. Representative approaches include: for \textbf{subject personalization}, methods like InstantBooth~\cite{instantbooth}, BLIP-Diffusion~\cite{blip-diffusion}, and ELITE~\cite{elite} focus on improved visual encoding and hierarchical concept mapping, while others~\cite{harmonizing} tackle weak text control by removing the projection of visual embeddings onto text embeddings. For~\textbf{face personalization}, InstantID~\cite{instantid} extracts both appearance and structural features from cropped faces. For \textbf{style personalization}, InstantStyle\cite{instantstyle} performs style transfer by injecting IP-Adapter \cite{ip-adapter} features into style-related layers of SDXL~\cite{sdxl}.

While this paper places less emphasis on zero-shot image personalization, \textbf{our experiments demonstrate the potential applicability of ACCORD to these approaches.}

\section{Method}
\subsection{Text-to-Image (T2I) Diffusion Models}
\label{ssec:T2I}
We begin with a brief introduction to the T2I Diffusion Model~\cite{ddpm}, which establishes a mapping between the image distribution and the standard Gaussian distribution via a forward noise-adding process and a reverse denoising process. Specifically, the forward process is composed of $T$ steps, gradually introducing Gaussian noise into a clear image or its latent code $\rvx_0$. The noisy code at time step $t\in \{ 1, 2, ..., T \}$ is calculated as follows:
\begin{equation}
\label{equ:add_noise}
    \rvx_t = \sqrt{\alpha_t}\rvx_0 + \sqrt{1 - \alpha_t}\bm{\epsilon},
\end{equation}
where $\bm{\epsilon} \sim \gN(\mathbf{0}, \mI)$ represents Gaussian noise, and $\alpha_t$ modulates the retention of the original image, decreasing as $t$ increases. When $T$ is sufficiently large, $\rvx_T$ is approximately a standard Gaussian.

The reverse process is modeled as a Markov chain, where a network $\gU_\theta$ with parameters $\theta$ is used to estimate the parameters of the true posterior distribution $q(\rvx_{t-1} | \rvx_t, \rvx_0)$ based on $t$ and $\rvx_t$, thereby denoising the noisy code. The optimization objective can be expressed as:
\begin{equation}
    \E_{\rvx_0, \bm{\epsilon}, \rvc, t} [\frac{1}{2\bm{\sigma}_t^2} \| \rvx_{t-1} - \gU_\theta (\rvx_t, \rvc, t) \|^2],
\end{equation}
where $\bm{\sigma}_t$ represents the standard deviation of the noisy code at time step $t$, and $\gU_\theta (\rvx_t, \rvc, t)$ is the output of the denoising model. During inference, the noisy code $\rvx_{t-1}$ at time step $t-1$ can be sampled from $\gN(\gU_\theta (\rvx_t, \rvc, t), \bm{\sigma}_t^2 \mI)$, yielding $\rvx_{t-1} = \gU_\theta (\rvx_t, \rvc, t) + \bm{\sigma}_{t}\bm{\epsilon}_{t}$, where $\bm{\epsilon} \sim \gN(\mathbf{0}, \mI)$. Note that the text representation or the conditioning information $\rvc$ is also fed into the denoising model to control the generation.

To facilitate subsequent discussions, we further introduce the \textbf{conditional dependence coefficient $r$} for two concepts $\rvc_p$ and $\rvc_g$, given the model's denoised output based on $(\rvc_p, \rvc_g)$ at time step $t$, i.e., $\rvx_{\theta, t} := \gU_\theta \big(\rvx_{t+1}, (\rvc_p, \rvc_g), t+1 \big)$. This coefficient can be defined as the ratio between the joint probability of the two concepts occurring together in $\rvx_{\theta, t}$ and the probability of their independent occurrences in the same representation:
\begin{align}
\label{equ:rho}
    r(\rvc_p, \rvc_g | \rvx_{\theta, t}) = \frac{p(\rvc_p, \rvc_g | \rvx_{\theta, t})}{p(\rvc_p | \rvx_{\theta, t})p(\rvc_g | \rvx_{\theta, t})}.
\end{align}
According to probability theory, $\rvc_p$ and $\rvc_g$ are conditionally independent given $\rvx_{\theta, t}$ when $r(\rvc_p, \rvc_g | \rvx_{\theta, t}) = 1$; they are conditionally dependent otherwise.

We provide a notation summary in \textit{ Tab.~\ref{tab:meaning_notation}} in the Appendix.
\subsection{Concept Coupling in Image Personalization}
Test-time finetuning methods are designed to achieve image personalization by fine-tuning a pretrained T2I model on a limited set of reference images with the personalization target, denoted as $\sD=\{(\rvx^i, \rvc^i) \}_{i=1}^{N}$. Here, $N$ is the number of training samples. $\rvx^i$ and $\rvc^i$ represent the reference image and the corresponding generation condition for the $i$-th pair, respectively. Note that $\rvc^i$ can be either an image caption or a combination of the caption and visual features extracted from the reference images for personalization purposes. In instances where captions for $\rvx^i$ are absent, we employ Vision Language Models (VLMs)~\cite{internvl} to generate image captions, aligning with practices in the community. This approach, compared to using prompt templates~\cite{dreambooth}, yields more meaningful textual concepts and assists in the decoupling of concepts.

One issue that plagues image personalization is concept coupling. As illustrated in Fig.~\ref{fig:concept_coupling}, although the personalization target $\rvc_p$ is a specifically designed red backpack, the training set $\sD$ consistently pairs the personalized backpack $\rvc_p$ with a girl $\rvc_g$. Consequently, the adapted T2I model often tends to generate an additional girl during inference, which contradicts the original prompt. This phenomenon can be statistically characterized as:
\begin{align}
\label{equ:concept_coupling}
\E_{\rvx_\theta}[ | \log r(\rvc_p, \rvc_g | \rvx_{\theta, 0}) - \log r(\rvc_s, \rvc_g) | ] \gg 0,
\end{align}
where $|\cdot|$ denotes the absolute value, $\rvx_{\theta, 0}$ denotes the image generated by the T2I model or its latent code, $\rvc_p$ and $\rvc_g$ represent the \textbf{p}ersonalized target condition and the \textbf{g}eneral text condition respectively. The \textbf{p}ersonalization target condition $\rvc_p$ can be either the textual trigger words used during LoRA training, the text embedding from~\cite{textual-inversion}, or the image representation from~\cite{ip-adapter}, while $\rvc_s$ denotes \textbf{s}uperclass of $\rvc_p$. Additionally, $r(\rvc_s, \rvc_g) = p(\rvc_s, \rvc_g) / p(\rvc_s) / p(\rvc_g)$. In this context, $\rvc_s$ embodies a general backpack, thus encompassing the overall properties of $\rvc_p$ and further characterizing the inherent relationships with other general concepts represented by $\rvc_g$~\cite{dreambooth, facechain}. The essence of the equation above is that the generated images $\rvx_{\theta, 0}$ typically introduce additional interdependencies between $\rvc_p$ and $\rvc_g$ that are not present in the inherent prior relationships between $\rvc_s$ and $\rvc_g$. Indeed,
\begin{lemma}
    $\E_{\rvx_\theta}[ | \log r(\rvc_p, \rvc_g | \rvx_{\theta, 0}) - \log r(\rvc_s, \rvc_g) | ] >0$ holds when either (i) $r(\rvc_p, \rvc_g | \rvx_{\theta, 0}) > r(\rvc_s, \rvc_g)$ (overly positive dependence) or (ii) $r(\rvc_p, \rvc_g | \rvx_{\theta, 0}) < r(\rvc_s, \rvc_g)$ (overly negative dependence). The equality is achieved if and only if $r(\rvc_p, \rvc_g|\rvx_{\theta, 0})=r(\rvc_s, \rvc_g)$.
\end{lemma}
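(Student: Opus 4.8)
The plan is to recognize this statement as a direct consequence of two elementary facts: the non-negativity of the absolute value, and the fact that the expectation of a non-negative random variable vanishes precisely when that variable is zero almost surely. First I would fix the abbreviation $Z(\rvx_\theta) := \log r(\rvc_p, \rvc_g \mid \rvx_{\theta, 0}) - \log r(\rvc_s, \rvc_g)$ and note that $\log r(\rvc_s, \rvc_g)$ is a deterministic constant (it depends only on the inherent prior relationships), whereas $\log r(\rvc_p, \rvc_g \mid \rvx_{\theta, 0})$ is random through its dependence on the generated latent $\rvx_{\theta, 0}$. Since $r$ is a ratio of positive probabilities it lies in $(0, \infty)$, so the logarithms are well defined, and $|Z(\rvx_\theta)| \ge 0$ holds pointwise for every realization of $\rvx_\theta$. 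Taking the expectation immediately gives $\E_{\rvx_\theta}[|Z(\rvx_\theta)|] \ge 0$.

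For the equality characterization I would invoke the standard measure-theoretic fact that, for a non-negative random variable $|Z|$, one has $\E_{\rvx_\theta}[|Z|] = 0$ if and only if $|Z| = 0$ almost surely with respect to the law of $\rvx_\theta$. Because $\log$ is strictly increasing, hence injective on $(0,\infty)$, the event $\{Z = 0\}$ coincides exactly with the event $\{r(\rvc_p, \rvc_g \mid \rvx_{\theta, 0}) = r(\rvc_s, \rvc_g)\}$. This yields the ``if and only if'' clause: the expected discrepancy is zero precisely when the conditional dependence coefficient of the generated image matches the inherent prior coefficient almost everywhere.

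The strict-inequality claim then follows by contraposition, and here the role of the absolute value is the essential point. Under hypothesis (i), $r(\rvc_p, \rvc_g \mid \rvx_{\theta, 0}) > r(\rvc_s, \rvc_g)$ forces $Z > 0$; under hypothesis (ii), $r(\rvc_p, \rvc_g \mid \rvx_{\theta, 0}) < r(\rvc_s, \rvc_g)$ forces $Z < 0$. In either case $|Z| > 0$ on a set of positive probability, so the expectation of this strictly positive integrand is strictly positive. I would explicitly contrast this with the \emph{signed} difference $\E_{\rvx_\theta}[Z]$, which can cancel to zero whenever overly positive and overly negative deviations coexist; the absolute value is exactly what prevents such cancellation and makes both directions of miscoupling contribute additively to the measured discrepancy, which is the conceptual reason $|\cdot|$ appears in \eqref{equ:concept_coupling}.

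I anticipate no substantive obstacle, since the argument is essentially the characterization of the equality case of $\E[|Z|] \ge 0$ together with injectivity of $\log$. The only point requiring care is the precise quantification of hypotheses (i) and (ii): the strict inequality needs the stated deviation to occur on a set of positive measure under the generative law of $\rvx_\theta$, rather than merely at an isolated realization. I would state this measurability caveat explicitly so that the ``almost surely'' qualifier used in the equality case is handled consistently throughout.
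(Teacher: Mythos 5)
Your proof is correct. The paper states this lemma without any proof, and your argument --- non-negativity of the absolute value, the standard fact that $\E_{\rvx_\theta}[|Z|]=0$ if and only if $Z=0$ almost surely, and injectivity of $\log$ on $(0,\infty)$ --- is exactly the elementary reasoning the paper leaves implicit; your added caveat that hypotheses (i) and (ii) must hold on a set of positive probability under the law of $\rvx_\theta$, rather than at an isolated realization, is a worthwhile precision that the paper's informally quantified statement glosses over.
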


Thus, the fundamental goal of concept decoupling is to correct the conditional dependence coefficient between $\rvc_p$ and $\rvc_g$ in the generated images so that it approximates the prior concept dependence between $\rvc_s$ and $\rvc_g$.

\subsection{Sources of Dependence Discrepancies}
The direct computation and minimization of the left-hand side (LHS) of Eq.~\eqref{equ:concept_coupling} pose significant challenges due to the absence of a closed-form expression. Instead, we analyze this discrepancy by introducing an intermediate term $\log r(\rvc_p, \rvc_g | \rvx_T)$, which allows us to separate the total discrepancy into two meaningful and computable components, as formalized in Theorem~\ref{thm:coupling_decomposition}.

\begin{theorem}
\label{thm:coupling_decomposition}
The LHS of Eq.~\eqref{equ:concept_coupling} can be decomposed into the following two terms:
\begin{align}
\label{equ:variation_decomp}
    \E_{\rvx_\theta} \Big[ |
    \underbrace{ \log r(\rvc_p, \rvc_g | \rvx_{\theta, 0}) -
    \log r(\rvc_p, \rvc_g | \rvx_T)}_{
    \text{\scriptsize \tikz[baseline=(char.base)]{\node[shape=circle,draw,inner sep=0.5pt] (char) {1}} Denoising Dependence Discrepancy}}
    + \underbrace{ \log r(\rvc_p, \rvc_g) - \log r(\rvc_s, \rvc_g) }_{
    \text{\scriptsize \tikz[baseline=(char.base)]{\node[shape=circle,draw,inner sep=0.5pt] (char) {2}} Prior Dependence Discrepancy}} | \Big],
\end{align}
where $\rvx_T$ denotes multivariate standard Gaussian noise.
\end{theorem}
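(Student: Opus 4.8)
The plan is to reduce the claimed decomposition to a single algebraic identity and then to establish that identity from the defining property of the terminal latent $\rvx_T$. Since the absolute value and the expectation $\E_{\rvx_\theta}$ are applied to the whole bracket on both sides, it suffices to prove, for every realization $\rvx_\theta$, the \emph{exact} pointwise equality of integrands
\begin{align}
\log r(\rvc_p, \rvc_g | \rvx_{\theta, 0}) - \log r(\rvc_s, \rvc_g)
= \big[ \log r(\rvc_p, \rvc_g | \rvx_{\theta, 0}) - \log r(\rvc_p, \rvc_g | \rvx_T) \big]
+ \big[ \log r(\rvc_p, \rvc_g) - \log r(\rvc_s, \rvc_g) \big],
\end{align}
where the left side is precisely the integrand of the LHS of \eqref{equ:concept_coupling}. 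No inequality or triangle-type estimate is needed, so once this holds I simply take $|\cdot|$ and then $\E_{\rvx_\theta}$ on both sides to recover the theorem.

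First I would cancel the two terms common to both sides, namely $\log r(\rvc_p, \rvc_g | \rvx_{\theta, 0})$ and $-\log r(\rvc_s, \rvc_g)$. This collapses the target to the single claim
\begin{align}
\label{equ:key_identity}
r(\rvc_p, \rvc_g | \rvx_T) = r(\rvc_p, \rvc_g),
\end{align}
that is, conditioning the dependence coefficient on the pure-noise latent $\rvx_T$ returns the unconditional (prior) coefficient. Equivalently, I am adding and subtracting $\log r(\rvc_p, \rvc_g | \rvx_T)$ inside the left side and then replacing it by $\log r(\rvc_p, \rvc_g)$ via \eqref{equ:key_identity}; the two bracketed groups of \eqref{equ:variation_decomp} are exactly what remains.

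The crux is therefore \eqref{equ:key_identity}, which I would derive from the fact recalled in \secref{ssec:T2I} that, for $T$ large, the forward process \eqref{equ:add_noise} drives $\rvx_T$ to a standard Gaussian $\gN(\mathbf{0}, \mI)$ that no longer depends on the clean code $\rvx_0$, and hence is independent of any concept it encodes. Concretely, content-freeness of the terminal marginal means $p(\rvx_T | \rvc) = p(\rvx_T)$ for each concept label, so Bayes' rule gives $p(\rvc | \rvx_T) = p(\rvc)$ and likewise $p(\rvc_p, \rvc_g | \rvx_T) = p(\rvc_p, \rvc_g)$. Substituting these into the definition \eqref{equ:rho} of the coefficient yields
\begin{align}
r(\rvc_p, \rvc_g | \rvx_T) = \frac{p(\rvc_p, \rvc_g | \rvx_T)}{p(\rvc_p | \rvx_T)\, p(\rvc_g | \rvx_T)} = \frac{p(\rvc_p, \rvc_g)}{p(\rvc_p)\, p(\rvc_g)} = r(\rvc_p, \rvc_g),
\end{align}
which is \eqref{equ:key_identity}; note in particular that $r(\rvc_p, \rvc_g | \rvx_T)$ is then a constant in $\rvx_T$, so it passes through $\E_{\rvx_\theta}$ untouched.

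The step I expect to be the main obstacle is making this independence precise rather than merely asymptotic: for finite $T$ the equality $p(\rvx_T | \rvc) = p(\rvx_T)$ holds only approximately, so I would either adopt the standard large-$T$ idealization already invoked in \secref{ssec:T2I} (treating $\rvx_T$ as an exactly content-free Gaussian sample, which makes \eqref{equ:key_identity} exact) or, if an exact finite-$T$ statement is wanted, quantify the residual total-variation gap between $p(\rvx_T | \rvc)$ and $\gN(\mathbf{0}, \mI)$ and carry it as an additive error term through the cancellation. Everything else—the cancellation and the substitution into \eqref{equ:rho}—is routine algebra once \eqref{equ:key_identity} is secured.
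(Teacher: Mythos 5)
Your proposal is correct and follows essentially the same route as the paper: both reduce the decomposition to the single identity $r(\rvc_p, \rvc_g | \rvx_T) = r(\rvc_p, \rvc_g)$ and establish it from $p(\rvx_T | \rvc) = p(\rvx_T)$ via Bayes' theorem applied to the definition \eqref{equ:rho}. Your extra remark on the finite-$T$ approximation is a reasonable refinement, but the paper (like your first option) treats $\rvx_T$ as exactly independent standard Gaussian noise by construction of the generative process, so no error term is needed.
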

Since $\rvx_T$ is Gaussian noise sampled independently of the conditions $\rvc_p$ and $\rvc_g$, it follows that $\log r(\rvc_p, \rvc_g |\rvx_T) = \log r(\rvc_p, \rvc_g)$. The detailed proof is provided in Appendix~\ref{app:proof_3.1}. Therefore, the expression in~\eqref{equ:variation_decomp} equals the left-hand side of Eq.~\eqref{equ:concept_coupling}.

The \textbf{denoising dependence discrepancy} \circled{1} captures the change in conditional dependence between $\rvc_p$ and $\rvc_g$ introduced during denoising, whereas the \textbf{prior dependence discrepancy} \circled{2} reflects the alteration in prior dependence due to deviations of $\rvc_p$ from $\rvc_s$. The conditional dependence coefficient of $\rvc_p$ and $\rvc_g$ on $\rvx_T$, $\log r(\rvc_p, \rvc_g)$, bridges the denoising dependence and prior dependence.

Building on this decomposition, we propose \textbf{ACCORD},
a plug-and-play method comprising two loss functions: the \textbf{Denoising Decouple Loss (DDLoss)} and the \textbf{Prior Decouple Loss (PDLoss)}. The DDLoss minimizes the denoising dependence discrepancy by leveraging the implicit classification capabilities of the diffusion model, while the PDLoss alleviates prior dependence discrepancy, particularly when $\rvc_p$ is trainable, by utilizing the classification capability of CLIP. Collectively, these strategies work synergistically to minimize concept coupling, which will be elaborated below.

\begin{figure*}[t]
    \centering
    \includegraphics[width=\linewidth]{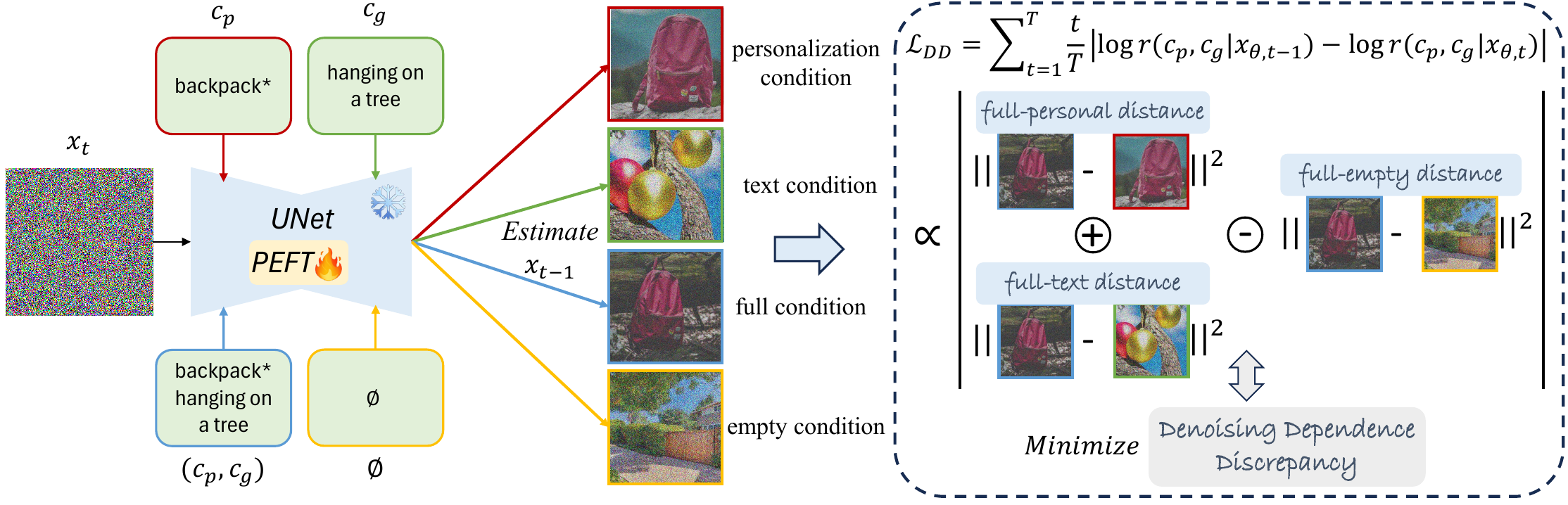}
    \caption{Denoising Decouple Loss $\Ls_\text{DD}$. The UNet estimates $\rvx_{t-1}$ based on $\rvx_t$ and four different conditions, then constrains the relationships between the four denoising results. The objective of $\Ls_\text{DD}$ is to prevent the conditional dependence coefficient between the personalization target $\rvc_p$ and the general text condition $\rvc_g$ from varying significantly between adjacent timesteps.}
    \label{fig:DDLoss}
\end{figure*}
\subsection{Denoising Decouple Loss (DDLoss)}
We first elaborate on the DDloss, which specifically targets the denoising dependence discrepancy. Directly minimizing the denoising dependence discrepancy term in Eq.~\eqref{equ:variation_decomp} is not well-aligned with the time step sampling mechanism employed during the training of diffusion models. This incompatibility arises because the term connects the first and last time steps, bypassing the relationships between successive steps. To address this issue, we propose to relax this term by upper-bounding it with the sum of dependence discrepancies between adjacent denoising steps:
\begin{align}
\label{equ:denoising_dd_bound}
    |\log r(\rvc_p, \rvc_g |\rvx_{\theta, 0}) - \log r(\rvc_p, \rvc_g |\rvx_T)|
    =\, & |\sum_{t=1}^{T} \log r(\rvc_p, \rvc_g |\rvx_{\theta, t-1}) - \log r(\rvc_p, \rvc_g |\rvx_{\theta, t})| \notag \\
    \leq \, & \sum_{t=1}^T |\log r(\rvc_p, \rvc_g |\rvx_{\theta, t-1}) - \log r(\rvc_p, \rvc_g |\rvx_{\theta, t})|.
\end{align}
This relaxation follows from the triangle inequality. Minimizing this upper bound effectively discourages the conditional dependence between the personalization target and any other concepts from changing abruptly between consecutive denoising steps.

Next, by exploiting the diffusion model as an implicit classifier~\cite{facechain}, we can derive a closed-form expression for $\log r(\rvc_p, \rvc_g |\rvx_{\theta, t-1}) - \log r(\rvc_p, \rvc_g |\rvx_{\theta, t})$:
\begin{theorem}
\label{thm:computable_adjacent_rho}
The dependence discrepancy between successive time steps in diffusion models can be computed as:
\begin{align}
\label{equ:rho_computable}
    \, & \log r(\rvc_p, \rvc_g |\rvx_{\theta, t-1}) - \log r(\rvc_p, \rvc_g |\rvx_{\theta, t}) \notag \\
     =&\, \frac{1}{2\bm{\sigma}_t^2}\Big[\| \gU_\theta \big( \rvx_t, (\rvc_p, \rvc_g), t \big) - \gU_\theta \big(\rvx_{\theta, t}, \rvc_p, t \big) \|^2
     \,+ \| \gU_\theta \big(\rvx_t, (\rvc_p, \rvc_g), t \big) - \gU_\theta \big( \rvx_{\theta, t}, \rvc_g, t \big) \|^2 \notag\\
    &\,- \| \gU_\theta \big( \rvx_t, (\rvc_p, \rvc_g), t \big) - \gU_\theta \big( \rvx_{\theta, t}, \varnothing, t \big) \|^2\Big],
\end{align}
where $\varnothing$ denotes an empty control condition.
\end{theorem}
Theorem~\ref{thm:computable_adjacent_rho} follows from Bayes' theorem and the Gaussianity of noisy latents at timestep $t-1$; see \textit{Appendix~\ref{app:proof_3.3}} for details. Intuitively, Eq.~\eqref{equ:rho_computable} measures dependence changes by comparing the model's prediction for the joint concept $(\rvc_p, \rvc_g)$ against its predictions for each individual concept and the empty condition, thus penalizing deviations that imply a change in their relationship. Finally, we define the DDLoss as:
\begin{align}
\label{equ:DDLoss}
    \Ls_\text{DD} \! = \! \sum_{t=1}^{T} \! \frac{t}{T} \! |\! \log r(\rvc_p, \rvc_g |\rvx_{\theta, t-1}) \! - \! \log r(\rvc_p, \rvc_g |\rvx_{\theta, t}) \!|.
\end{align}
In this formulation, $\Ls_\text{DD}^t$ with a larger $t$ contributes more to concept decoupling due to loss accumulation. Therefore, we scale $\Ls_\text{DD}^t$ by a linearly time-varying weight $t/T$.
Moreover, to compute the DDLoss in practice, we use $\rvx_t$ instead of $\rvx_{\theta, t}$. This approximation is effective for two reasons: (i) During diffusion training, we sample individual time steps using Eq.~\eqref{equ:add_noise} rather than iterating from time step $T$ to $0$. Consequently, $\rvx_{\theta, t}$ is not directly accessible when denoising from $t$ to $t-1$. (ii) $\rvx_t$ serves as an unbiased estimate of $\rvx_{\theta, t}$. Additionally, we stop the gradients for $\gU_\theta(\rvx_t, \rvc_g, t)$ and $\gU_\theta(\rvx_t, \varnothing, t)$, following Facechain-SuDe~\cite{facechain}, to prevent damaging the model's prior knowledge.

For ease of understanding, we show the computation of DDLoss in Fig.~\ref{fig:DDLoss}.

\begin{figure}[t]
    \centering
    \includegraphics[width=0.7\linewidth]{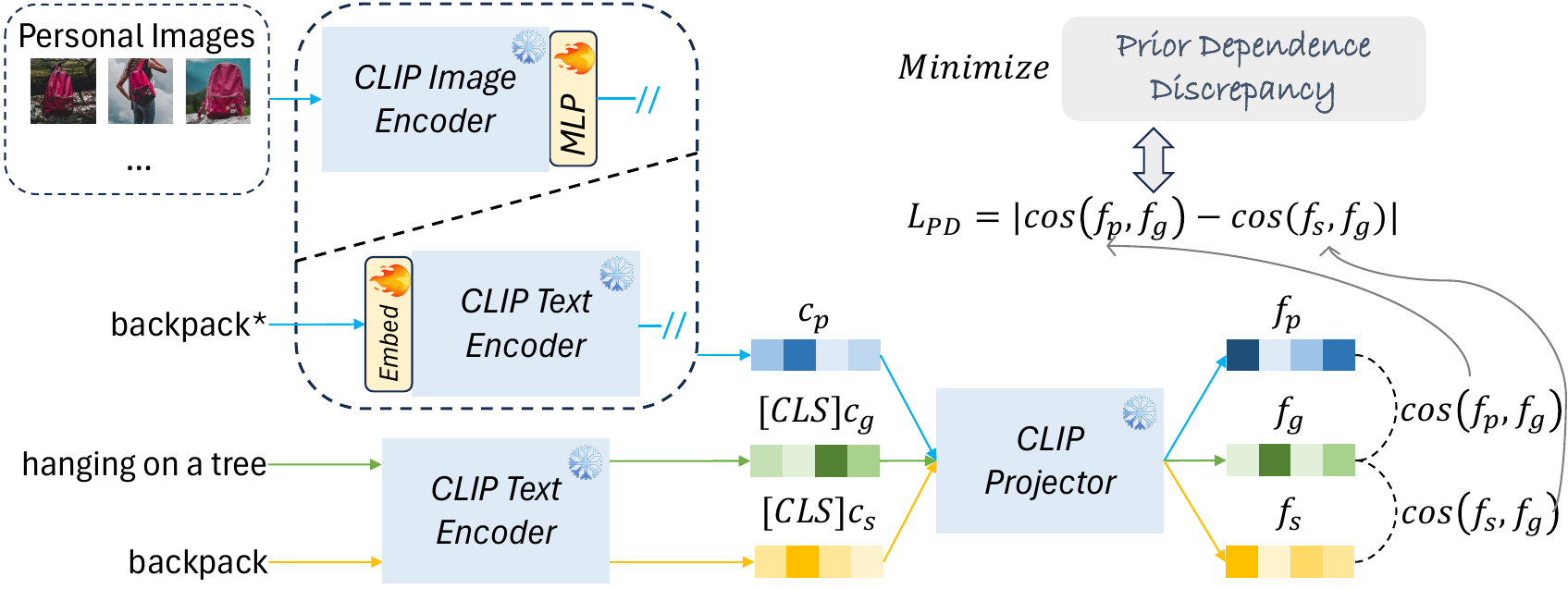}
    \caption{Prior Decouple Loss $\Ls_\text{PD}$. Either the Image Encoder or the Text Encoder of CLIP can be used to generate $\rvc_p$. The purpose of $\Ls_\text{PD}$ is to prevent excessive prior dependence between $\rvc_p$ and the general text condition $\rvc_g$. We first use the CLIP projector to map $\rvc_p$ and $\rvc_g$ into $\rvf_s$ and $\rvf_g$, respectively, and then minimize the absolute difference between $\cos(\rvf_p, \rvf_g)$ and $\cos(\rvf_s, \rvf_g)$.}
    \label{fig:pdloss}
\end{figure}

\subsection{Prior Decouple Loss (PDLoss)}

When $\rvc_p$ remains fixed and close to $\rvc_s$ during training, the coupling of concepts primarily arises from the first term in Eq.~\eqref{equ:variation_decomp}, specifically the denoising dependence discrepancy. In this context, minimizing only the DDLoss allows the personalized target to retain its superclass's relationship with various text control conditions. However, it is worth noting that $\rvc_p$ can also be trained as either the CLIP text representation~\cite{textual-inversion} or the representation extracted from reference images by the CLIP image encoder~\cite{ip-adapter}, to better capture the details of the personalization target. Yet, it is crucial to note that training $\rvc_p$ may cause $\rvc_p$ to diverge from $\rvc_s$ and so drastically increase the prior dependence discrepancy (see \circled{2} in~\eqref{equ:variation_decomp}). As a remedy, we introduce the PDLoss.
Specifically, the prior dependence discrepancy can be equivalently written as:
\begin{equation}
\label{equ:pd_motivation}
    \log r(\rvc_p, \rvc_g) - \log r(\rvc_s, \rvc_g) = \log \frac{p(\rvc_g | \rvc_p)}{p(\rvc_g | \rvc_s)}.
\end{equation}
This equation shows that reducing prior dependence discrepancy involves aligning the conditional probabilities $p(\rvc_g | \rvc_p)$ and $p(\rvc_g | \rvc_s)$. Unfortunately, the diffusion model does not facilitate this alignment because Eq.~\eqref{equ:pd_motivation} is independent of the denoising process. Therefore, we leverage the semantic space of CLIP, which is inherently density-aligned due to its training objective.  Specifically, CLIP is trained with the InfoNCE loss, whose optimization objective is to estimate a density ratio relative to the noise, as shown in Lemma~\ref{lemma:infonce} (equivalent to Eq.~(2) in InfoNCE~\cite{infonce}).
\begin{lemma}
\label{lemma:infonce}
For an observation $\rvc_j$ and condition $\rvc_k$, the InfoNCE objective seeks to estimate a function $\mathcal{F}(\rvc_j, \rvc_k)$ which is proportional to the following density ratio: $\mathcal{F}(\rvc_j, \rvc_k) \propto \frac{p(\rvc_j | \rvc_k)}{p(\rvc_j)}.$
\end{lemma}
In the case of CLIP, we denote $\tau$ as the temperature coefficient, and let $\rvf_j$ and $\rvf_k$ be the projected features of two concepts $\rvc_j$ and $\rvc_k$ using the CLIP projection head. Then the function $\mathcal{F}(\rvc_j, \rvc_k)$ is instantiated as the scaled cosine similarity $\tau \cos(\rvf_j, \rvf_k)$ in the joint embedding space. Thus, we have the following approximation:
\begin{align}
\tau \cos(\rvf_j, \rvf_k) \propto \frac{p(\rvc_j|\rvc_k)}{p(\rvc_j)}.
\label{equ:estimate_conditional_p}
\end{align}
We then align $p(\mathbf{c}_g | \mathbf{c}_p)$ and $p(\mathbf{c}_g | \mathbf{c}_s)$ by ensuring that $\cos(\rvf_p, \rvf_g)$ and $\cos(\rvf_s, \rvf_g)$ are closely matched.

\begin{theorem}
\label{thm:pdloss}
The prior dependence discrepancy can be minimized by the following PDLoss:

\begin{align}
\label{equ:pd_loss}
    \Ls_\text{PD} &= \E_{\rvc_g}[|\cos(\rvf_p, \rvf_g) - \cos(\rvf_s, \rvf_g)|] \\
    & \propto \E_{\rvc_g}[\left| \frac{p(\rvc_g | \rvc_p) - p(\rvc_g | \rvc_s)}{p(\rvc_g)} \right|].
    \label{eq:propto_relation}
\end{align}
\end{theorem}
The denominator $p(\rvc_g)$ in Eq.~\eqref{eq:propto_relation} is not optimizable. Thus, minimizing PDLoss encourages minimization of $|p(\rvc_g | \rvc_p) - p(\rvc_g | \rvc_s)|$, namely aligns $p(\rvc_g | \rvc_p)$ and $p(\rvc_g | \rvc_s)$. To facilitate understanding, we show the computation diagram of PDLoss in Fig.~\ref{fig:pdloss}. Note that in our formulation, $\rvc_s$ is the text embedding of the superclass (e.g., backpack) given by the CLIP Text Encoder, while $\rvc_p$ (e.g., the specifically designed red backpack in Fig.~\ref{fig:concept_coupling}) is often set as either a trainable text embedding in CLIP or a visual representation mapped to the same space. As both $\rvc_s$ and $\rvc_p$ exist in this shared space, they fulfill the necessary conditions to apply Eq.~\eqref{equ:estimate_conditional_p}. We empirically validate this design choice against several alternative objectives in Appendix~\ref{app:ablation_pdloss_design}, demonstrating that our formulation provides the best balance between text control and personalization fidelity.

In summary, our framework is both modular and broadly applicable. DDLoss can be applied to any fine-tuning-based personalization method without architectural changes, while PDLoss further benefits scenarios where the personalized embedding $\rvc_p$ is trainable. Depending on the personalization setup, the two losses can be used independently or together, making ACCORD a flexible plug-and-play regularizer for alleviating concept coupling.

\section{Experiments}

\begin{table}[!t]
\centering
\caption{Quantitative results on DreamBench. The ``*" indicates results using per-subject/style loss weights, tuned on a small validation set. ``Params." indicates the number of tunable parameters. The W(in)/L(oss) rate is calculated by pairwise human comparison between the anonymous generated results of the baseline and Ours*, with ties omitted. `PA' denotes percent agreement, namely the percentage of samples receiving consistent judgments from human annotators. The comparison methods improved based on the baseline are \textit{italicized}.}
\resizebox{\textwidth}{!}{
\begin{tabular}{lllllllc}
\hline
Method & CLIP-T$\uparrow$ & BLIP-T$\uparrow$ & CLIP-I$\uparrow$ & DINO-I$\uparrow$ & W$\uparrow$/L$\downarrow$ (\%) & PA (\%) & Params.\\
\hline
DreamBooth (DB) & 30.3 & 40.3 & 74.0 & 69.3 & 18.1/\textbf{75.7} & 73.3 & 819.7 M\\
\textit{CoRe-SD1.5} & 29.4	& 40.3 & 78.3 & 72.3 & 19.2/\textbf{61.7} & 60.0 & 819.7M\\
\textit{Facechain-SuDe} & \textbf{31.4} & 41.6 & 74.3 & 70.5 & 14.2/\textbf{69.2} & 70.0 & 819.7 M\\
DB w/ Ours & 31.1 \redtext{(+0.8)} & 42.1 \redtext{(+1.8)} & 77.8 \redtext{(+3.8)} & 73.5 \redtext{(+4.2)} & -/- & - & 819.7 M\\
\textbf{DB w/ Ours*} & 31.3 \redtext{(+1.0)} & \textbf{42.1} \redtext{(+1.8)} & \textbf{78.6} \redtext{(+4.6)} & \textbf{74.4} \redtext{(+5.1)} & -/- & - & 819.7 M\\
\hline
CustomDiffusion (CD) & 34.2 & 45.4 & 62.7 & 56.9 & 8.1/\textbf{88.1} & 76.7 & 18.3 M\\
\textit{ClassDiffusion} & \textbf{34.3}	& 45.8	& 61.3 & 55.0 & 7.5/\textbf{75.8} & 80.0 & 18.3M\\
CD w/ Ours & 33.9 \bluetext{(-0.3)} & 46.4 \redtext{(+1.0)} & 71.1 \redtext{(+8.4)} & 65.2 \redtext{(+8.3)} & -/- & - & 18.3 M\\
\textbf{CD w/ Ours*} & 34.1 \bluetext{(-0.1)} & \textbf{46.6} \redtext{(+1.2)} & \textbf{71.4} \redtext{(+8.7)} & \textbf{65.6} \redtext{(+8.7)} & -/- & - & 18.3 M\\
\hline

LoRA (SDXL) & 34.5 & 47.0 & 76.3 & 72.1 & 17.6/\textbf{70.5} & 70.0 & 92.9 M \\
\textit{SVDiff} & 32.7 & 43.7 & 72.6 & 66.6 & 1.7/\textbf{85.0} & 83.3 & 0.2 M\\
Omnigen & \textbf{35.3} & 47.8 & 73.9 & 68.6 & 30.8/\textbf{48.3} & 46.7 & 3.8 B\\

LoRA w/ Ours & 35.1 (\redtext{+0.6}) & \textbf{47.8} (\redtext{+0.8}) & 76.8 (\redtext{+0.5}) & 71.9 (\bluetext{-0.2}) & -/- & - & 92.9 M \\

\textbf{LoRA w/ Ours*} & 35.2 (\redtext{+0.7}) & 47.7 (\redtext{+0.7}) & \textbf{77.1} (\redtext{+0.8}) & \textbf{72.4} (\redtext{+0.3}) & -/- & - & 92.9 M \\
\hline
VisualEncoder (VE) & 25.9 & 36.1 & 79.1 & 75.5 & 21.1/\textbf{67.6} & 56.7 & 3.0 M\\
VE w/ Ours & 25.9 \redtext{(+0.0)} & 35.8 \bluetext{(-0.3)} & 80.0 \redtext{(+0.9)} & 76.0 \redtext{(+0.5)} & -/- & - & 3.0 M\\
\textbf{VE w/ Ours*} & \textbf{26.3} \redtext{(+0.4)} & \textbf{36.1} \redtext{(+0.0)} & \textbf{80.4} \redtext{(+1.3)} & \textbf{76.7} \redtext{(+1.2)} & -/- & - & 3.0 M\\
\hline
\end{tabular}
}

\label{tab:subject_personalization}
\end{table}

\textbf{Experimental Setup.}
\label{sec:implementation}
We evaluate our method on diverse image personalization tasks: subject-driven personalization using DreamBench~\cite{dreambooth}, style personalization with StyleBench~\cite{styleshot}, and zero-shot face personalization on FFHQ~\cite{ffhq}. For subject personalization, we use CLIP-T~\cite{dreambooth} and BLIP2-T~\cite{facechain} for text alignment, and CLIP-I and DINO-I~\cite{dreambooth} for subject fidelity\footnote{The ``T" denotes text and the ``I" denotes image, respectively.}. To reduce background interference, subjects in both real and generated images are segmented using the Reference Segmentation Model~\cite{evf-sam}. For style personalization, CLIP-T and BLIP-T measure prompt-image alignment, while style similarity is computed using the mean Gram matrix distance (Gram-D)~\cite{neural_style_transfer}. For face personalization, besides CLIP-T and BLIP-T, we further assess facial similarity using Face-Sim (the average cosine similarity of ArcFace~\cite{arcface} embeddings for real and generated faces), validated by IP-Adapter~\cite{ip-adapter}. We compare our approach with 10 baselines~\cite{lora,dreambooth,custom-diffusion,svdiff,ip-adapter,facechain,classdiffusion,core,b-lora,xiao2025omnigen}. Our losses are integrated as a plug-and-play module, leaving architectures and hyperparameters unchanged. Only DDLoss is used for methods that do not update the personalized embedding (e.g., DreamBooth, LoRA), while both losses are applied otherwise.

\begin{wraptable}[20]{R}{0.62\linewidth}
\small
\setlength{\tabcolsep}{2.0pt}
\begin{minipage}[t]{\linewidth}
\caption{Quantitative results on StyleBench. The ``*" denotes adjusting DDLoss and PDLoss weights across different styles. ``Gram-D" is the gram matrix distance.}

\begin{tabular}{llll}
\hline
Method & CLIP-T$\uparrow$ & BLIP-T$\uparrow$ & Gram-D$\downarrow$\\
\hline
DreamBooth & 31.3 & 46.6 & 42728 \\
\textit{Facechain-SuDe} & 31.0 & 45.8 & \textbf{39978}\\
DB w/ Ours & 31.9 \redtext{(+0.6)} & \textbf{47.3} \redtext{(+0.7)} & 42524 \redtext{(-0.5\%)}\\
DB w/ Ours* & \textbf{32.0} \redtext{(+0.7)} & 47.2 \redtext{(+0.6)} & 41911 \redtext{(-1.9\%)}\\
\hline
CustomDiffusion & 31.2 & 47.7 & 53347 \\
\textit{ClassDiffusion} & 31.8 & 48.4 & 52998\\
CD w/ Ours & 31.7 \redtext{(+0.5)} & 48.5 \redtext{(+0.8)} & 48649 \redtext{(-8.8\%)} \\
CD w/ Ours* & \textbf{31.8} \redtext{(+0.6)} & \textbf{48.5} \redtext{(+0.8)} & \textbf{47852} \redtext{(-10.3\%)} \\
\hline

LoRA (SDXL) & 33.1 & 49.7 & 47193 \\

\textit{Omnigen} & 31.9 & 47.5 & 45067 \\
\textit{B-LoRA} & 33.0 & 49.0 & \textbf{42048} \\
LoRA (SDXL) w/ Ours & 33.6 (\redtext{+0.5}) & 50.7 (\redtext{+1.0}) & 47693 (\bluetext{+1.1\%}) \\

LoRA (SDXL) w/ Ours* & \textbf{33.6} (\redtext{+0.5}) & \textbf{50.7} (\redtext{+1.0}) & 46361 (\redtext{-1.8\%}) \\
\hline
VisualEncoder & 17.7 & 30.2 & 32176 \\
VE w/ Ours & 17.7 \redtext{(+0.0)} & 30.3 \redtext{(+0.1)} & 31382 \redtext{(-2.5\%)} \\
VE w/ Ours* & \textbf{18.4} \redtext{(+0.7)} & \textbf{30.9} \redtext{(+0.7)} & \textbf{27984} \redtext{(-13.0\%)}\\
\hline
\end{tabular}

\label{tab:style_personalization}

\end{minipage}
\end{wraptable}

\subsection{Personalization Experiments}

We report quantitative results for subject, style, and face personalization in Tabs.~\ref{tab:subject_personalization}-\ref{tab:face_personalization}, and visualization results in Figs.~\ref{fig:subject_full_vis}-\ref{fig:qualitative_results}. More visualizations are provided in \textit{Appendix~\ref{app:more_visualization}}.

\textbf{Subject Personalization.} We compare the performance of different methods on subject personalization in Tab.~\ref{tab:subject_personalization} and Fig.~\ref{fig:subject_full_vis}. Compared methods include: data-regularization-based Dreambooth and CustomDiffusion, weight-regularization-based LoRA and SVDiff, loss-regularization-based Facechain-SuDe and ClassDiffusion, region-regularization-based CoRe, and the zero-shot method Omnigen. It can be observed that:
(i) Our method improves DreamBooth and CustomDiffusion by a large margin. They utilize a regularization dataset to enhance text alignment, but may inadvertently sacrifice subject fidelity. This issue arises because the regularization dataset may confuse the model in distinguishing which concepts from the reference images require personalization and which do not. As a result, the model's focus on the personalization target is diminished, leading to a loss of personalization fidelity. \textbf{Our method significantly improves personalization fidelity by complementing the regularization dataset with explicit concept decoupling.}
(ii) When compared to LoRA and VisualEncoder, which do not utilize a regularization dataset, ACCORD shows smaller improvements. Nevertheless, \textbf{ACCORD is able to enhance both text alignment and subject fidelity simultaneously, while most existing image personalization methods}~\cite{svdiff, facechain, core} \textbf{tend to improve one aspect at the expense of the other}. Notably, LoRA (SDXL) with ACCORD even outperforms the powerful Omnigen with 3.8B parameters, a testament to the efficiency and effectiveness of our approach.
(iii) Our DDLoss and PDLoss significantly enhance the performance of existing baselines in a \textbf{plug-and-play} manner. Compared to the similar plug-and-play loss regularization methods Facechain-SuDe, ClassDiffusion and CoRe, our proposed loss functions offer stronger regularization by directly optimizing concept coupling, resulting in greater performance.

We also conduct a study on human preferences regarding the generated results, as shown in Tab.~\ref{tab:subject_personalization}. Specifically, annotators are presented with quadruplets consisting of (prompt, reference images, method 1 result, method 2 result) and are asked to select the better generation result based on two key criteria: (i) fidelity to the personalized subject or style, and (ii) alignment with the text prompt. The correspondence of method 1 (or 2) to either the compared method or our method is randomized and anonymized. We collect feedback from multiple annotators, resulting in a total of 1,800 responses. From this study, we observe that: (i) \textbf{Our method is generally preferred by users compared to all baselines}; and (ii) Notably, the greater the improvement in objective metrics over the baseline provided by our method, the more it is preferred by users, indicating an \textbf{alignment between subjective and objective evaluations}.

\begin{figure}[!t]
    \centering
    \includegraphics[width=\linewidth]{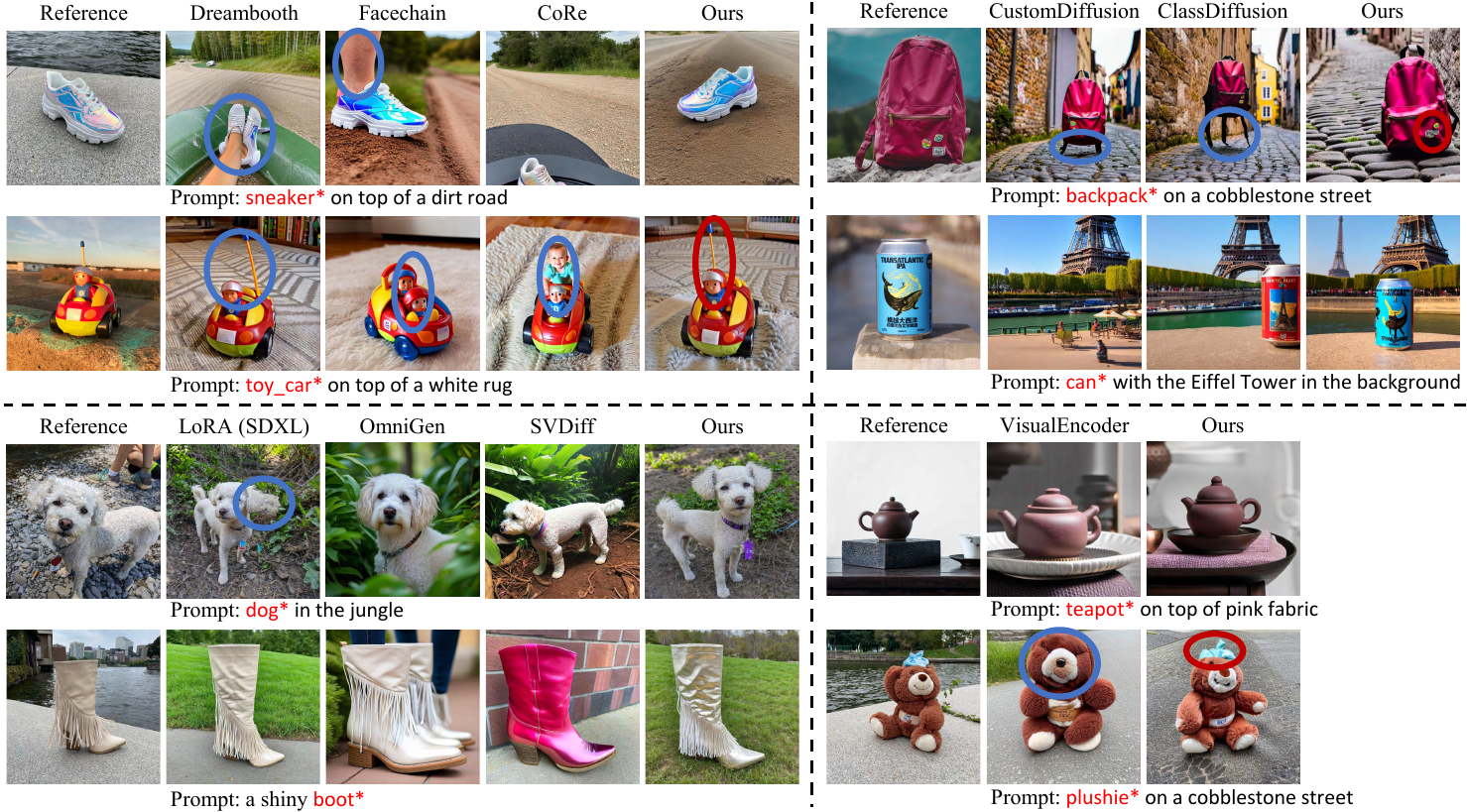}
    \caption{Subject personalization comparison across baselines, where \redtext{\textbf{superclass*}} is the personalization target. One of multiple training references is shown. \textcolor{red}{Red}/\textcolor{blue}{blue} circles highlight well-/poorly-generated regions. Our method achieves superior text alignment and personalization fidelity.}
    \label{fig:subject_full_vis}
\end{figure}

\begin{figure*}[!t]
    \centering
    \includegraphics[width=\linewidth]{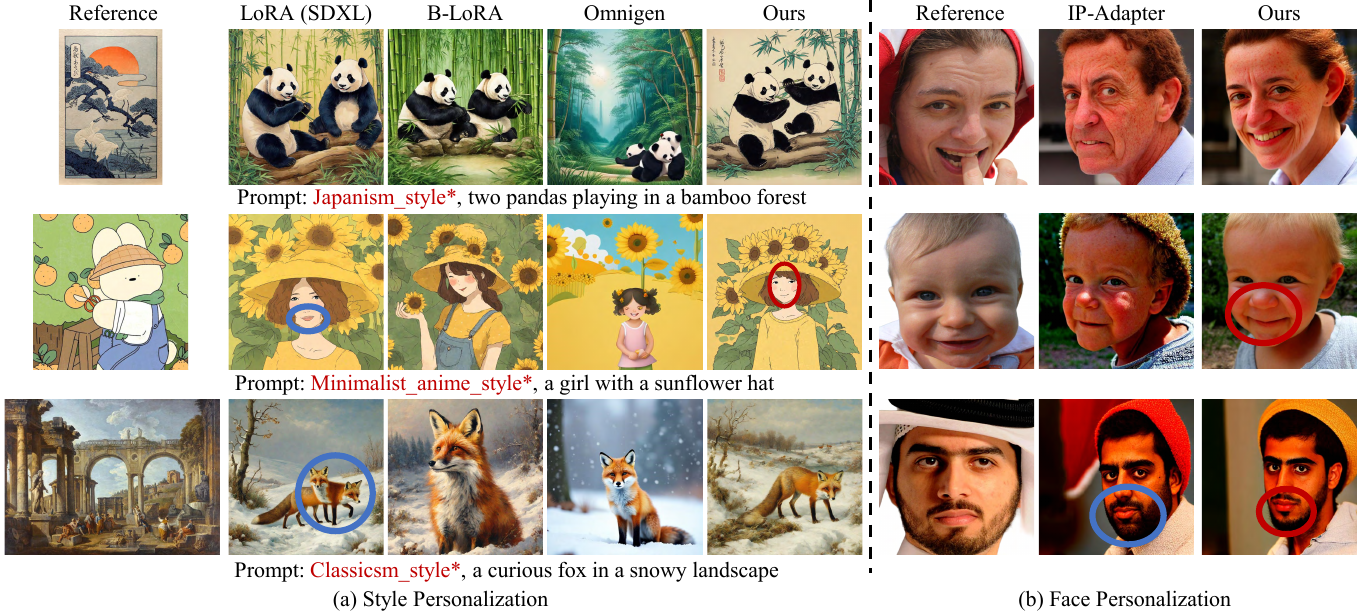}
    \caption{
    Comparison of style and face personalization results; \redtext{\textbf{style*}} denotes the target style. For style personalization, the training set includes multiple references, and one is shown for brevity. Red circles highlight well-generated regions; blue circles mark areas with poor results. (a) Our model outputs styles closer to reference images: the Japanism result resembles a painting, the minimalist anime style result depicts the mouth as a line, and classicism result matches the original style without anomalies.
    (b) IP-Adapter alters gender (row 1) or makes faces appear older (row 2). Our method better replicates details such as beards (row 3).}
    \label{fig:qualitative_results}
\end{figure*}

\begin{table}[!t]
\centering
\begin{minipage}{0.49\linewidth}
\centering
\caption{Ablation study on the effects of DDLoss, and PDLoss across backbones.}
\resizebox{\linewidth}{!}{
\begin{tabular}{lllll}
\toprule
Method & CLIP-T & BLIP-T & CLIP-I & DINO-I \\
\midrule
VE (SD1.5) & 25.9 & 36.1 & 79.1 & 75.5 \\

+PDLoss & 26.2 & 35.9 & 80.0 & 75.9 \\
+DDLoss & 26.0 & 35.8 & 79.8 & 75.8 \\

+PD \& DDLoss & \textbf{26.3} & \textbf{36.1} & \textbf{80.4} & \textbf{76.7} \\
\midrule
VE (SDXL) & 27.1 & 38.4 & 82.8 & 77.6 \\

+PDLoss & 27.8 & 39.5 & 82.9 & 77.4 \\
+DDLoss & 28.0 & \textbf{40.0} & 82.6 & 77.9 \\

+PD \& DDLoss & \textbf{28.3} & 39.8 & \textbf{83.1} & \textbf{78.1} \\
\midrule
LoRA (SD1.5) & 31.1 & 42.6 & 78.4 & 74.6 \\
+DDLoss & \textbf{31.8} & \textbf{43.0} & \textbf{78.4} & \textbf{75.1} \\
\midrule
\textbf{LoRA (FLUX)} & 33.4 & 46.8 & 75.8 & 72.8 \\
+DDLoss & \textbf{34.8} & \textbf{47.8} & \textbf{78.2} & \textbf{73.4} \\
\bottomrule
\end{tabular}
}
\label{tab:ablation}
\end{minipage}
\hfill
\begin{minipage}{0.49\linewidth}
\centering
\caption{Impact of Reference Image Count on Subject-driven Personalization Performance.}
\resizebox{\linewidth}{!}{
\begin{tabular}{lcccc}
\toprule
Method & \multirow{2}{*}{CLIP-T} & \multirow{2}{*}{BLIP-T} & \multirow{2}{*}{CLIP-I} & \multirow{2}{*}{DINO-I} \\
(Image Count) & & & & \\
\midrule
VE (1)      & \textbf{25.0} & \textbf{34.2} & 75.9 & 71.0 \\
VE + Ours (1) & 24.7 & 33.3 & \textbf{78.9} & \textbf{73.9} \\
\midrule
VE (3)      & 25.0 & 34.5 & 78.0 & 74.3 \\
VE + Ours (3) & \textbf{25.6} & \textbf{34.8} & \textbf{79.4} & \textbf{75.7} \\
\midrule
VE (all)    & 25.9 & 36.1 & 79.1 & 75.5 \\
VE + Ours (all) & \textbf{26.3} & \textbf{36.1} & \textbf{80.4} & \textbf{76.7} \\
\bottomrule
\end{tabular}
}
\label{tab:ablation_refer_num}

\centering
\caption{Quantitative results on FFHQ.}
\resizebox{\linewidth}{!}{
\begin{tabular}{llll}
\toprule
Method & CLIP-T$\uparrow$ & BLIP-T$\uparrow$ & Face-Sim$\uparrow$\\
\midrule
IP-Adapter & 20.0 & 34.7 & 14.8\\
+ Ours & \textbf{20.7} \redtext{(+0.7)} & \textbf{34.8} \redtext{(+0.1)} & \textbf{16.4} \redtext{(+1.6)}\\
\bottomrule
\end{tabular}
}
\label{tab:face_personalization}
\end{minipage}

\end{table}

\textbf{Style Personalization.} We additionally compare with B-LoRA, which is specifically designed for style transfer by consolidating the training of two blocks for separating style and content. Tab.\ref{tab:style_personalization} and Fig.\ref{fig:qualitative_results}(a) show that our DDLoss and PDLoss significantly improve style personalization and boost all methods in a plug-and-play fashion. Similarly, LoRA (SDXL) with ACCORD, with 93M trainable parameters, outperforms Omnigen with 3.8B.

\textbf{Face Personalization.} We validate the potential of concept decoupling for zero-shot personalization, with a specific focus on face personalization, using the FFHQ dataset. Following the well-known zero-shot face personalization method IP-Adapter, we train the model with and without ACCORD based on SD 1.5. Experimental results are shown in Tab.~\ref{tab:face_personalization} and Fig.~\ref{fig:qualitative_results}(b), demonstrating that the introduction of DDLoss and PDLoss simultaneously enhances face similarity and text alignment.

\subsection{Ablation Study}
\begin{wraptable}{r}{0.5\linewidth}
\small
\setlength{\tabcolsep}{2.0pt}
\vspace{-2ex}
\caption{Ablation Study on DDLoss and PDLoss Weights.}
\centering
\begin{tabular}{lcccc}
\toprule
Loss Weights & CLIP-T & BLIP-T & CLIP-I & DINO-I \\
\midrule
CustomDiffusion (CD) & 34.2 & 45.4 & 62.7 & 56.9 \\
+0.1DD + 0.001PD & 33.9 & 46.5 & 70.7 & 64.9 \\
+0.1DD + 0.002PD & 34.0 & 46.5 & 70.5 & 64.8 \\
+0.1DD + 0.003PD & 33.9 & 46.4 & 71.1 & 65.2 \\
+0.2DD + 0.001PD & 33.9 & 46.5 & 70.7 & 64.8 \\
+0.2DD + 0.002PD & 33.9 & 46.5 & 70.8 & 65.1 \\
+0.2DD + 0.003PD & 34.0 & 46.6 & 70.7 & 65.0 \\
+0.3DD + 0.001PD & 33.9 & 46.4 & 71.0 & 65.3 \\
+0.3DD + 0.002PD & 34.0 & 46.5 & 70.8 & 65.1 \\
+0.3DD + 0.003PD & 34.0 & 46.5 & 70.7 & 64.9 \\
\bottomrule
\end{tabular}
\vspace{-2ex}
\label{tab:ablation_loss_weights}
\end{wraptable}
We study the impact of the proposed PDLoss and DDLoss on DreamBench in Tab.~\ref{tab:ablation}, and also investigate the impact of the number of reference images in Tab.~\ref{tab:ablation_refer_num}. Indeed, the proposed loss functions work synergistically and hold regardless of the number of reference images and T2I backbone (including \textbf{FLUX}). Crucially, these studies confirm that both DDLoss and PDLoss contribute positively to performance (Tab.~\ref{tab:ablation}) and that our method remains effective even with a single reference image (Tab.~\ref{tab:ablation_refer_num}), underscoring the robustness of our approach.

We further study the effect of combining DDLoss and PDLoss with different weights on CustomDiffusion (CD) in Tab.~\ref{tab:ablation_loss_weights}. Introducing DDLoss with weights between 0.1 and 0.3, and PDLoss with weights between 0.001 and 0.003, consistently yields robust improvements across all metrics. This indicates that the performance of DDLoss and PDLoss is not sensitive to the precise choice of weights within these ranges. While variations in the weights have minimal impact on text alignment, we find that assigning a relatively larger weight to one loss and a smaller value to the other (e.g., 0.1DD + 0.003PD or 0.3DD + 0.001PD) is generally more beneficial for personalization fidelity than either both being too small or both being too large. This may be because both losses being small provide insufficient constraint on dependence discrepancy, while both being large excessively emphasize the auxiliary objectives and might hurt the diffusion goal.

\section{Conclusion}
This paper tackles concept coupling in image personalization by reframing it as a statistical dependency problem. We identify two distinct sources---a Denoising Dependence Discrepancy and a Prior Dependence Discrepancy---and introduce two corresponding plug-and-play losses, DDLoss and PDLoss, to directly mitigate them. Comprehensive experiments demonstrate that our method, ACCORD, successfully improves the critical balance between personalization fidelity and text control, offering a readily-integrable solution for a wide range of existing methods.

\bibliography{mybib}
\bibliographystyle{unsrt}

\newpage
\appendix
\onecolumn
\section{Appendix}
\begin{table}[ht]
\caption{Meanings of notations.}
\label{tab:meaning_notation}
\centering
\resizebox{\textwidth}{!}{
\begin{tabular}{ll}
\hline
\textbf{Notation} & \textbf{Meaning}\\
\hline
$t$ & Denoising time step, ranging from $0$ to $T$.\\
$\rvx_0$ & Clear image or its latent code.\\
$\rvx_t$ & Noisy image or its latent code at time step $t$.\\
$\rvx_T$ & Noisy image or its latent code at time step $T$, modeled as a multivariate standard Gaussian noise.\\
$\alpha_t$ & Retention ratio of the original image at forward time step $t$.\\
$\bm{\epsilon}$ & Multivariate standard Gaussian noise.\\
$\theta$ & Network parameters.\\
$\bm{\sigma}_t$ & Standard deviation of the noisy code at time step $t$.\\
$\gU_\theta(\rvx_t, \rvc, t)$ & Output of the denoising model at time step $t-1$ given generation condition $\rvc$.\\
$\rvx_{\theta, t}$ & Shorthand for denoising output at time step $t-1$ given generation condition $(\rvc_p, \rvc_g)$.\\
$\mathbb{D}$ & Training set for the image personalization task.\\
$\rvx^i$ & $i$-th reference image in the training set.\\
$\rvc^i$ & $i$-th generation condition in the training set.\\
$\rvc_p$ & Personalized target condition.\\
$\rvc_g$ & General text condition.\\
$\rvc_s$ & Text condition for the superclass of $\rvc_p$.\\
$r(\rvc_p, \rvc_g | \rvx_{\theta, t})$ & Conditional dependence coefficient for concepts $\rvc_p$ and $\rvc_g$ given generated image $\rvx_{\theta, t}$.\\
$r(\rvc_p, \rvc_g)$ & Prior dependence coefficient for concepts $\rvc_p$ and $\rvc_g$.\\
$\rvf_p, \rvf_s, \rvf_g$ & Projections using the CLIP Projector for $\rvc_p$, $\rvc_s$, and $\rvc_g$.\\
\hline
\end{tabular}
}
\end{table}
\subsection{Proof of Theorem 1}
\label{app:proof_3.1}
We begin by briefly reviewing Theorem 1. The left-hand side (LHS) of Eq.~\eqref{equ:concept_coupling} can be decomposed into the two terms as in Eq.~\eqref{equ:variation_decomp}:
\begin{align}
    & \E_{\rvx_\theta}[ | \log r(\rvc_p, \rvc_g | \rvx_{\theta, 0}) - \log r(\rvc_s, \rvc_g) | ] \notag \\
    & = \E_{\rvx_\theta} \Big[ |
    \underbrace{ \log r(\rvc_p, \rvc_g | \rvx_{\theta, 0}) -
    \log r(\rvc_p, \rvc_g | \rvx_T)}_{
    \text{\scriptsize \tikz[baseline=(char.base)]{\node[shape=circle,draw,inner sep=0.5pt] (char) {1}} Denoising Dependence Discrepancy}}
    + \underbrace{ \log r(\rvc_p, \rvc_g) - \log r(\rvc_s, \rvc_g) }_{
    \text{\scriptsize \tikz[baseline=(char.base)]{\node[shape=circle,draw,inner sep=0.5pt] (char) {2}} Prior Dependence Discrepancy}} | \Big],
\end{align}
where $\rvx_T$ denotes multivariate standard Gaussian noise.

Since $\rvx_T$ is sampled independently of the conditions $\rvc_p$ and $\rvc_g$, it follows that $p(\rvc | \rvx_T) = p(\rvc)$. Consequently,
\begin{align}
    \label{equ:cond_depend_cp_cg}
    \log \frac{p(\rvc_p, \rvc_g | \rvx_T)}{p(\rvc_p | \rvx_T) p(\rvc_g | \rvx_T)} = \log \frac{p(\rvc_p, \rvc_g)}{p(\rvc_p) p(\rvc_g)}.
\end{align}
Thus, the proof is complete.
\subsection{Proof of Theorem 2}
\label{app:proof_3.3}
According to the definition of $r(\rvc_p, \rvc_g | \rvx_{\theta, t-1})$:
\begin{align}
\label{equ:app_cond_depend_cp_cg}
    r(\rvc_p, \rvc_g | \rvx_{\theta, t-1}) = \frac{p(\rvc_p, \rvc_g | \rvx_{\theta, t-1})}{p(\rvc_p | \rvx_{\theta, t-1})p(\rvc_g | \rvx_{\theta, t-1})},
\end{align}
the core of computing $\log r(\rvc_p, \rvc_g | \rvx_{\theta, t-1})$ lies in the computation of $p(\hat \rvc | \rvx_{\theta,t-1})$, where $\hat \rvc$ is an arbitrary condition. By applying Bayes' theorem, we have:
\begin{align}
\label{equ:hatc_given_xtheta}
    p(\hat \rvc | \rvx_{\theta,t-1}) = p(\hat \rvc | \rvx_{\theta,t-1}, \rvx_{\theta,t}) = \frac{p(\hat \rvc | \rvx_{\theta,t})p(\rvx_{\theta,t-1}|\rvx_{\theta,t},\hat \rvc)}{p(\rvx_{\theta,t-1}|\rvx_{\theta,t})}.
\end{align}
The first equation holds because the computation of $\rvx_{\theta, t-1}$ relies on $\rvx_{\theta, t}$:
\begin{equation}
\label{equ:recursive_sampling}
    \rvx_{\theta, t-1} = \gU_\theta (\rvx_t, (\rvc_p, \rvc_g), t), \quad
    \rvx_t = \rvx_{\theta,t} + \bm{\sigma}_{t+1}\bm{\epsilon}_{t+1}, \quad \bm{\epsilon}_{t+1} \sim \gN(0,I),
\end{equation}
where $\bm{\sigma}_{t+1}$ is the standard deviation of the noisy code at time step $t+1$.

Next, we compute $p(\rvx_{\theta,t-1}|\rvx_{\theta,t},\hat \rvc)$ and $p(\rvx_{\theta,t-1}|\rvx_{\theta,t})$. In diffusion models, $p(\rvx_{\theta,t-1}|\rvx_{\theta,t},\hat \rvc)$ is a Gaussian distribution that can be parameterized as:
\begin{equation}
\label{equ:hatc_likelihood}
    p(\rvx_{\theta, t-1}|\rvx_{\theta,t},\hat \rvc) = \gN \big(\rvx_{\theta, t-1}; \gU_\theta (\rvx_{\theta, t}, \hat \rvc, t), \sigma_t^2\mI \big) = \exp(C-\frac{\| \rvx_{\theta, t-1} - \gU_\theta (\rvx_{\theta, t}, \hat \rvc, t) \|^2} {2\bm{\sigma}_t^2}),
\end{equation}
where $C$ is a constant. We then substitute Eq.~\eqref{equ:recursive_sampling} into Eq.~\eqref{equ:hatc_likelihood} and obtain:
\begin{equation}
\label{equ:computable_hatc_likelihood}
    p(\rvx_{\theta, t-1}|\rvx_{\theta,t},\hat \rvc) = \exp(C-\frac{\| \gU_\theta (\rvx_t, (\rvc_p, \rvc_g), t) - \gU_\theta (\rvx_{\theta, t}, \hat \rvc, t) \|^2} {2\bm{\sigma}_t^2}),
\end{equation}

Note that $\hat\rvc$ is an arbitrary condition, so $p(\rvx_{\theta,t-1}|\rvx_{\theta,t})$ can be obtained by setting $\hat \rvc = \varnothing$. Therefore, we substitute Eq.~\eqref{equ:computable_hatc_likelihood} into Eq.~\eqref{equ:hatc_given_xtheta} and obtain:
\begin{equation}
\label{equ:expand_details}
    \log p(\hat \rvc | \rvx_{\theta, t-1}) - \log p(\hat \rvc | \rvx_{\theta, t}) = \frac{1}{2\bm{\sigma}_t^2} \Big[\| \gU_\theta (\rvx_t, (\rvc_p, \rvc_g), t) - \gU_\theta (\rvx_{\theta, t}, \varnothing, t) \|^2 - \| \gU_\theta (\rvx_t, (\rvc_p, \rvc_g), t) - \gU_\theta (\rvx_{\theta, t}, \hat \rvc, t) \|^2 \Big]
\end{equation}
Finally, by substituting Eq.~\eqref{equ:expand_details} into the definition of $r(\rvc_p, \rvc_g |\rvx_{\theta, t-1})$~\eqref{equ:app_cond_depend_cp_cg}, we obtain:
\begin{align}
    &\, \log r(\rvc_p, \rvc_g |\rvx_{\theta, t-1}) - \log r(\rvc_p, \rvc_g |\rvx_{\theta, t})
    \notag \\
     =&\, \frac{1}{2\bm{\sigma}_t^2}\Big[\| \gU_\theta \big( \rvx_t, (\rvc_p, \rvc_g), t \big) - \gU_\theta \big(\rvx_{\theta, t}, \rvc_p, t \big) \|^2 \notag\\
     &\,+ \| \gU_\theta \big(\rvx_t, (\rvc_p, \rvc_g), t \big) - \gU_\theta \big( \rvx_{\theta, t}, \rvc_g, t \big) \|^2 \notag\\
    &\,- \| \gU_\theta \big( \rvx_t, (\rvc_p, \rvc_g), t \big) - \gU_\theta \big( \rvx_{\theta, t}, \varnothing, t \big) \|^2\Big].
\end{align}
This completes the proof.
\begin{table}[!ht]
\centering
\caption{Ablation study on the PDLoss design.}

\begin{tabular}{lllll}
\toprule
Optimization target & CLIP-T$\uparrow$ & BLIP-T$\uparrow$ & CLIP-I$\uparrow$ & DINO-I$\uparrow$\\
\midrule
VisualEncoder w/o Ours & 25.9 & 36.1 & 79.1 & 75.5 \\
$\mathbb{E}_{\rvc_g}[| cos(\rvf_p, \rvf_g) - cos(\rvf_s, \rvf_g)| ]$ & 26.2 \redtext{(+0.3)} & 35.9 \bluetext{(-0.2)} & \textbf{80.0} \redtext{(+0.9)} & \textbf{75.9} \redtext{(+0.4)} \\
$\mathbb{E}_{\rvc_g}[ | cos(\rvf_p, \rvf_g) | ]$ & 26.4 \redtext{(+0.4)} & 36.8 \redtext{(+0.7)} & 79.9 \redtext{(+0.8)} & 75.5 \redtext{(+0.0)} \\
$\mathbb{E}_{\rvc_g}[| cos(\rvf_p, \rvf_g)+1 | ]$ & \textbf{27.7} \redtext{(+1.8)} & \textbf{38.4} \redtext{(+2.3)} & 77.6 \bluetext{(-1.5)} & 73.3 \bluetext{(-2.2)} \\
$\mathbb{E}_{\rvc_g}[| 1 - \cos(\rvf_p - \rvf_g, \rvf_s - \rvf_g) |]$ & 26.5 & 36.9 & 79.5 & 75.5\\
\bottomrule
\end{tabular}
\label{tab:cosine_similarity_target}
\end{table}

\subsection{Impact of DDLoss and PDLoss in Reducing Dependence Discrepancy}
\label{app:loss_visualization}
\begin{figure}[!t]
    \centering
    \subfigure[Denoising dependence discrepancy]{
    \begin{minipage}[c]{0.48\linewidth}
        \centering
        \includegraphics[width=\linewidth]{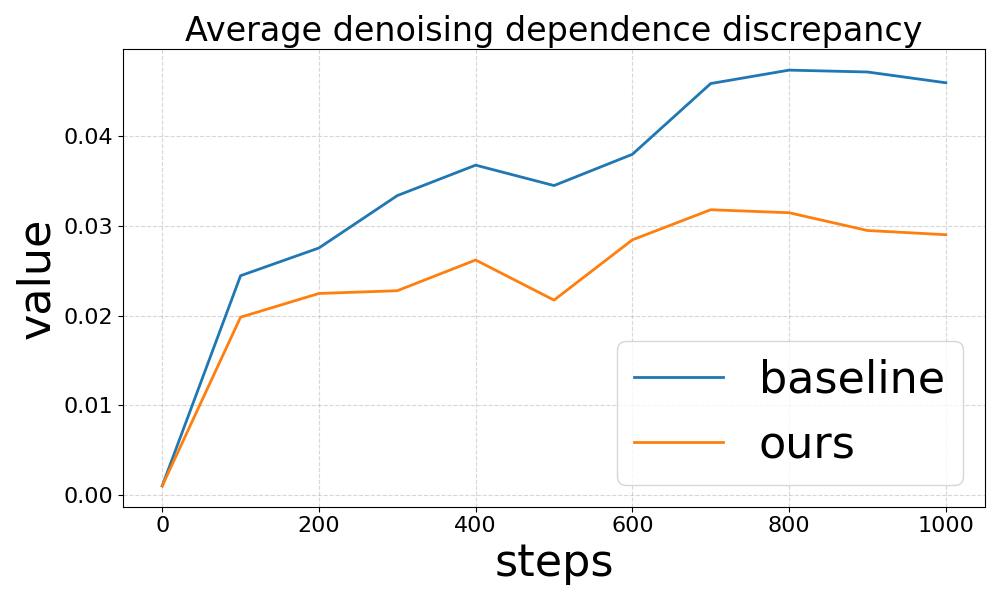}
    \end{minipage}}
    \subfigure[Cosine similarity discrepancy]{
    \begin{minipage}[c]{0.48\linewidth}
        \centering
        \includegraphics[width=\linewidth]{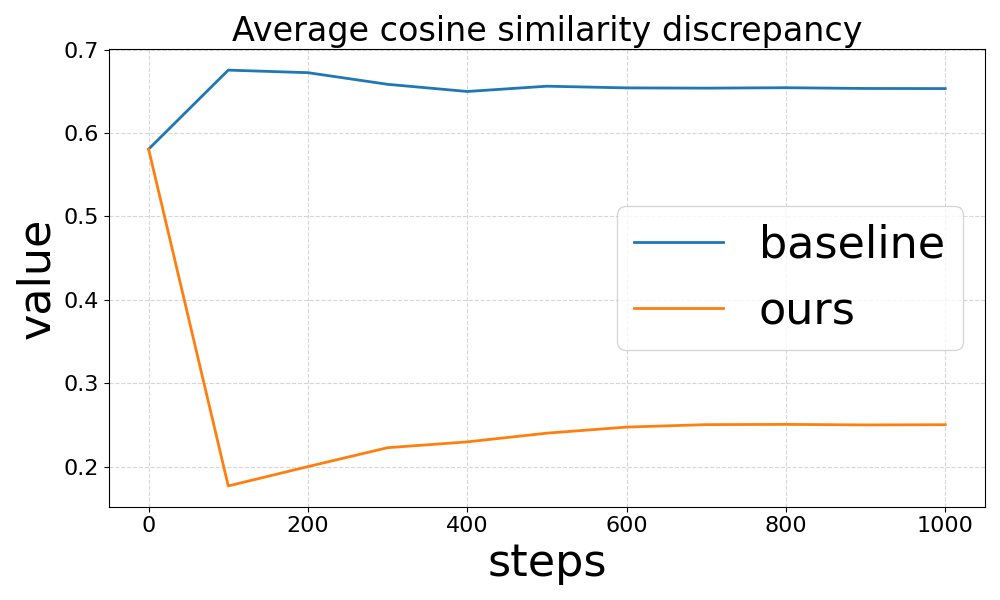}
    \end{minipage}}
    \caption{Visualization of the impact of DDLoss and PDLoss.}
    \label{fig:loss_visualization}
\end{figure}

To clearly demonstrate the roles of DDLoss and PDLoss during training, we visualize their effects in Fig.~\ref{fig:loss_visualization}. It can be observed that with the use of DDLoss, the increase in denoising dependence discrepancy, $|\log r(\rvc_p, \rvc_g | \rvx_{\theta, 0}) - \log r(\rvc_p, \rvc_g | \rvx_T)|$, is suppressed. On the other hand, the application of PDLoss results in a reduction in the cosine similarity discrepancy $|cos(\rvf_p, \rvf_g) - cos(\rvf_s, \rvf_g)|$.

\begin{figure*}[!t]
    \centering
    \includegraphics[width=\linewidth]{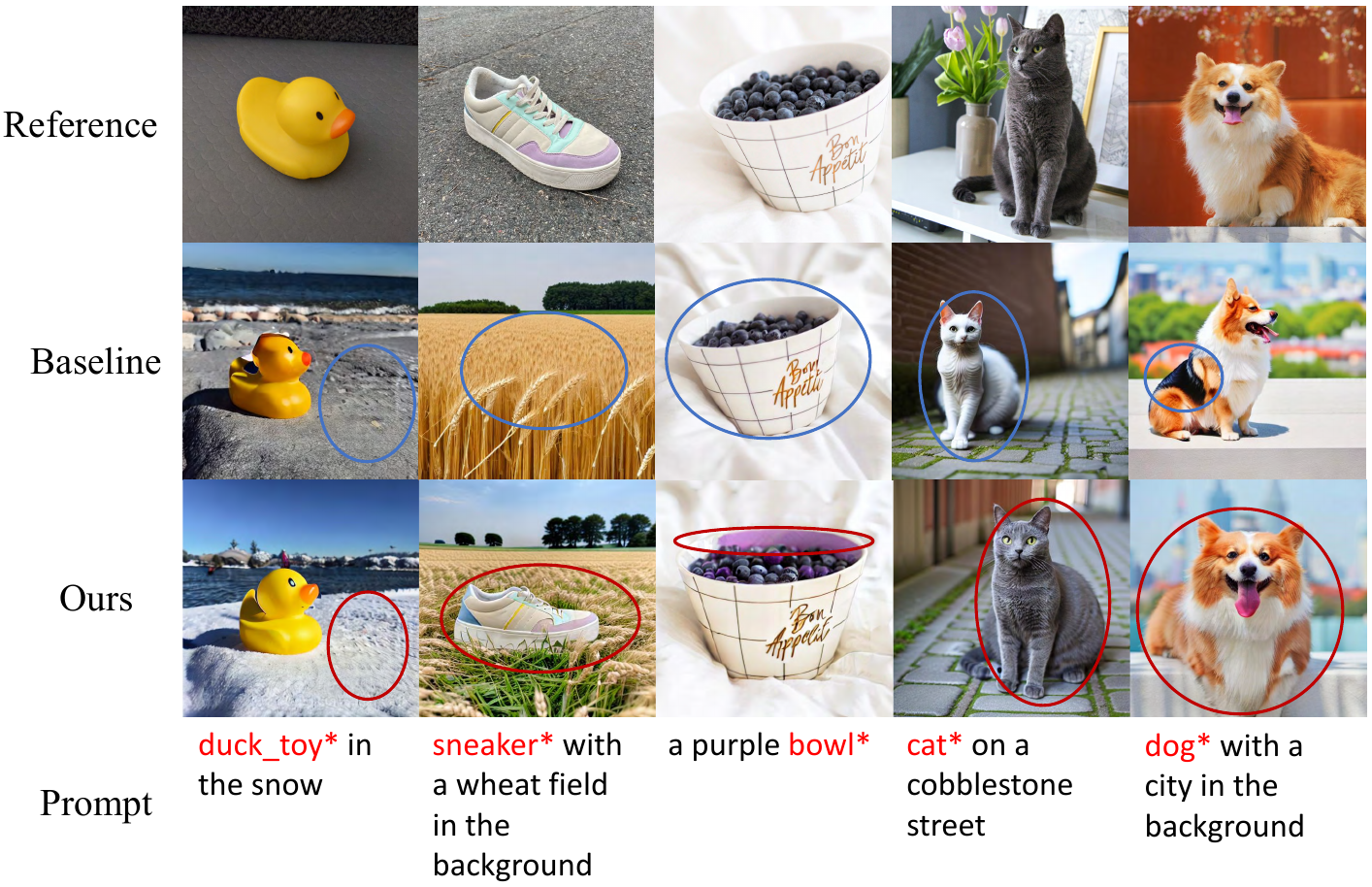}
    \caption{A comparison of the visual outcomes of subject personalization, where ``\redtext{\textbf{superclass*}}'' denotes the personalization target. In the 1st, 2nd, and 3rd columns, our method aligns better with the prompt and successfully generates a snowy scene, a wheat field, and a bowl with an inner purple wall; in contrast, the baseline model fails to do so. In the 4th and 5th columns, our method generates subjects that bear a closer resemblance to the reference images. However, the baseline either produces an unrelated cat (4th column) or generates anomalies like a black dog's back (5th column). It should be noted that for columns 1, 2, and 5, our method not only replaces the background but also adjusts the perspective to make the generated image look more natural.}
    \label{fig:subject_visualization}
\end{figure*}

\subsection{Ablation Study on the Impact of PDLoss Design}
\label{app:ablation_pdloss_design}
To minimize concept coupling in Eq.~\eqref{equ:concept_coupling}:
\begin{align}
\E_{\rvx_\theta}[ | \log r(\rvc_p, \rvc_g | \rvx_{\theta, 0}) - \log r(\rvc_s, \rvc_g) | ],
\end{align}
we align the cosine similarity $cos(\rvf_p, \rvf_g)$ with $cos(\rvf_s, \rvf_g)$ in Eq.~\eqref{equ:pd_loss}.
\begin{equation}
    \Ls_\text{PD} = \E_{\rvc_g}[|cos(\rvf_p, \rvf_g) - cos(\rvf_s, \rvf_g)|],
\end{equation}

To further understand the role of the cosine similarity target in PDLoss, we study its impact in Tab.~\ref{tab:cosine_similarity_target}. In addition, we also compare our PDLoss with an empirical design: $\mathbb{E}_{\rvc_g}[\| 1 - \cos(\rvf_p - \rvf_g, \rvf_s - \rvf_g) \|]$. It is observed that: (i) As the cosine similarity target decreases, metrics related to text alignment, namely CLIP-T and BLIP-T, improve, whereas metrics associated with personalization fidelity, such as CLIP-I and DINO-I, decline. This observation aligns with our derivation.

A lower cosine similarity indicates a reduced $p(\rvc_g | \rvc_p)$, implying that $\rvc_p$ is less likely to interfere with other text concepts. However, if the similarity between $\rvc_p$ and $\rvc_g$ decreases excessively, it becomes challenging for $\rvc_p$ to maintain inherent relationships with its superclass and other concepts, thereby impairing personalization fidelity. Consequently, setting the cosine similarity target as $cos(\rvf_s, \rvf_g)$ achieves a balance between text alignment and personalization fidelity. (ii) The empirical approach, $\mathbb{E}_{\rvc_g}[\| 1 - \cos(\rvf_p-\rvf_g,
\rvf_s-\rvf_g) \|]$, also improves upon the baseline by emphasizing text alignment. However, this method cannot be derived from Eq.~\eqref{equ:pd_motivation}, namely the definition of prior dependence discrepancy.
\begin{equation}
    \log r(\rvc_p, \rvc_g) - \log r(\rvc_s, \rvc_g) = \log \frac{p(\rvc_g | \rvc_p)}{p(\rvc_g | \rvc_s)}.
\end{equation}

\subsection{Real World Personalization}
\begin{figure}[h]
    \centering
    \includegraphics[width=0.8\linewidth]{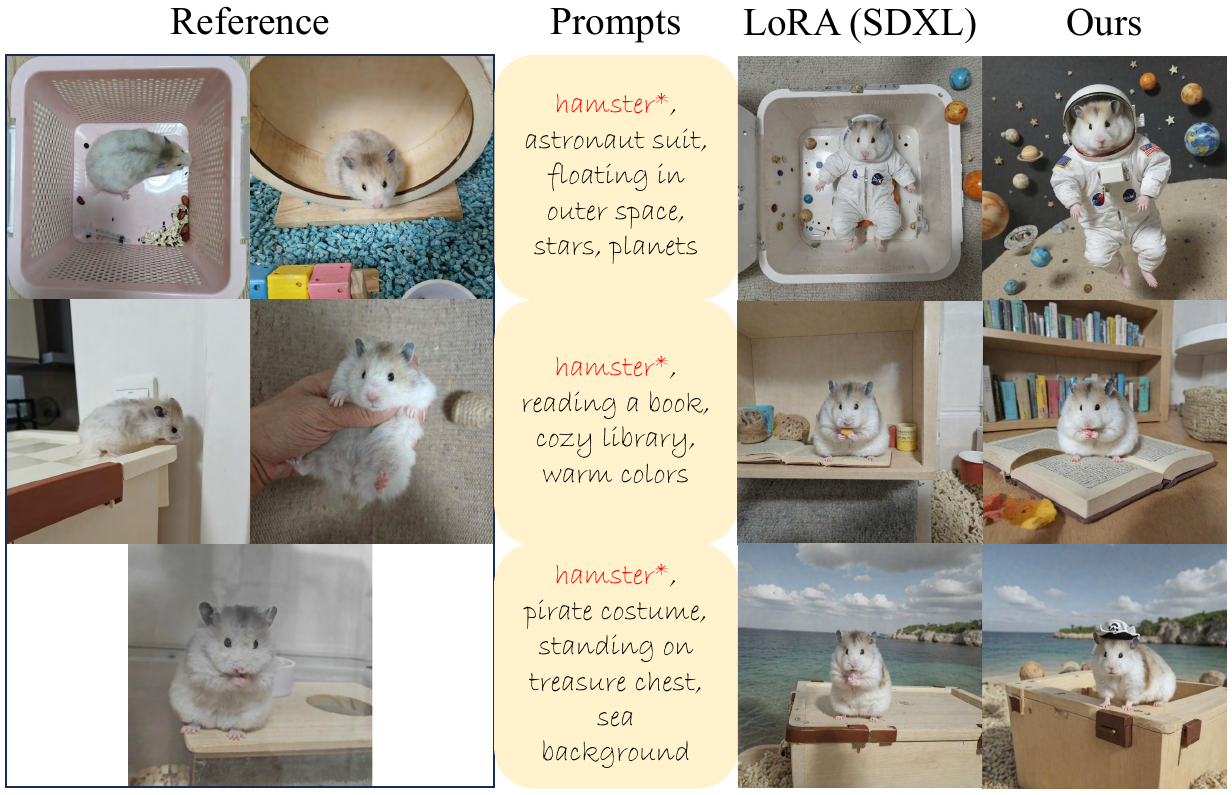}
    \caption{Real-world personalization visualization results. Compared with the baseline, our approach successfully generates stars (1st row), a library scene (2nd row), and a pirate hat (3rd row).}
    \label{fig:real_world_visualization}
\end{figure}
\label{app:real_world_exp}
We further collect a set of hamster photographs for personalization experiments, aiming to evaluate the practical effectiveness of our proposed method in real-world personalization scenarios. The quantitative results and qualitative examples are presented in Tab.~\ref{tab:real_world_exp} and Fig.~\ref{fig:real_world_visualization}, respectively. It can be observed that our method substantially improves the text control capability over the baseline. As shown in Fig.~\ref{fig:real_world_visualization}, our approach successfully generates stars (1st row), a library scene (2nd row), and a pirate hat (3rd row).

\begin{table}[h]
\centering
\caption{Real-world personalization quantitative results.}
\begin{tabular}{lllll}
\toprule
Method & CLIP-T$\uparrow$ & BLIP-T$\uparrow$ & CLIP-I$\uparrow$ & DINO-I$\uparrow$\\
\midrule
LoRA (SDXL) & 38.6 & 52.1 & 68.9 & \textbf{59.1} \\
LoRA (SDXL) w/ Ours & \textbf{39.9} & \textbf{54.2} & \textbf{69.1} & 58.6 \\
\bottomrule
\end{tabular}
\label{tab:real_world_exp}
\end{table}

\subsection{More Visualization Results and Failure Cases}
\label{app:more_visualization}
\subsubsection{Additional visualization results for subject, style, and face personalization.}
We provide more visualization results in Fig.~\ref{fig:subject_visualization},~\ref{fig:style_visualization} and \ref{fig:face_visualization}. For subject and style personalization, the ``Baseline'' is Dreambooth. For face personalization, the ``Baseline'' is IP-Adapter. The following observations can be made: (1) Our method demonstrates superior text alignment compared to the baseline. Specifically, in the first, second, and third columns of Fig.~\ref{fig:subject_visualization}, our method successfully generates a snowy scene, a wheat field, and a purple bowl, whereas the baseline model does not. In the first, third, fourth and fifth columns of Fig.~\ref{fig:style_visualization}, our approach successfully produces images of a pirate, a snowy landscape, a knight and a blue shield. Finally, in the third and fourth columns of Fig.~\ref{fig:face_visualization}, our method generates a cityscape background and cultural elements according to the prompts. (2) Our method better preserves personalization fidelity. In the fourth and fifth columns of Fig.~\ref{fig:subject_visualization}, our method generates subjects that more closely resemble the reference images, whereas the baseline either produces an unrelated cat (4th column) or anomalies such as a black dog's back (5th column). It should be noted that for columns 1, 2, and 5, our method not only replaces the background but also adjusts the perspective to make the generated image look more natural. In the second row of Fig.~\ref{fig:style_visualization}, the images generated by our method exhibit styles more closely aligned with the reference styles, namely the clay style. Finally, in all columns of Fig.~\ref{fig:face_visualization}, the faces generated by our method more closely resemble the reference faces. Specifically, our method better captures the subject's age in columns 1 and 5; and the hair style in columns 2 and 3.

\subsubsection{Failure case analysis.}
Finally, we present several failure cases of ACCORD in Fig.~\ref{fig:failure_cases}. The causes of these failures can be broadly categorized into two types: (1) Concepts that are strongly entangled with the personalization target are not explicitly disentangled during training. If a concept that is undesirably coupled with the personalization target is not included in the training prompts, it will not be addressed by DDLoss and PDLoss. Consequently, at inference time, the model must rely solely on its generalization ability to disentangle this concept from the personalization target, which may lead to failure. (2) Inaccurate modeling of concept dependencies by the foundation T2I model. The effectiveness of DDLoss and PDLoss is fundamentally constrained by the capabilities of the underlying T2I foundation model. When the base model struggles to simultaneously generate both the superclass of the personalization target and another concept, it likely fails to accurately capture the dependencies between them. In such cases, the decoupling effect of DDLoss becomes limited. This observation also suggests that more powerful foundation T2I models can enable DDLoss to achieve better disentanglement.

\begin{figure*}[ht]
    \centering
    \includegraphics[width=\linewidth]{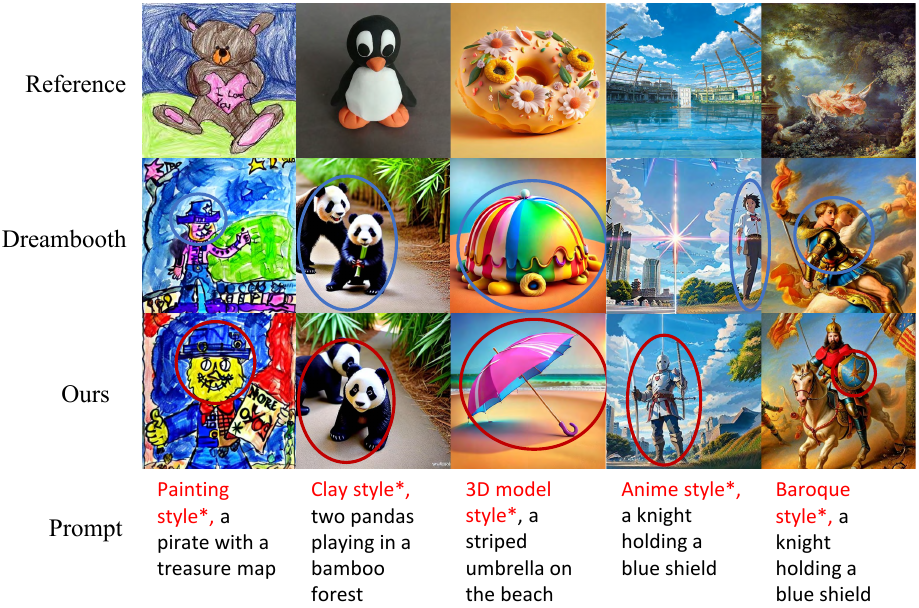}
    \caption{A comparison of style personalization visual outcomes, with ``\redtext{\textbf{style*}}'' indicating the target style. Compared to the baseline, our method successfully generates a pirate, a snowy landscape, a knight, and a blue shield in columns 1, 3, 4, and 5. In column 2, our approach produces clay-style images that more closely match the reference.}
    \label{fig:style_visualization}
\end{figure*}

\begin{figure*}[ht]
    \centering
    \includegraphics[width=\linewidth]{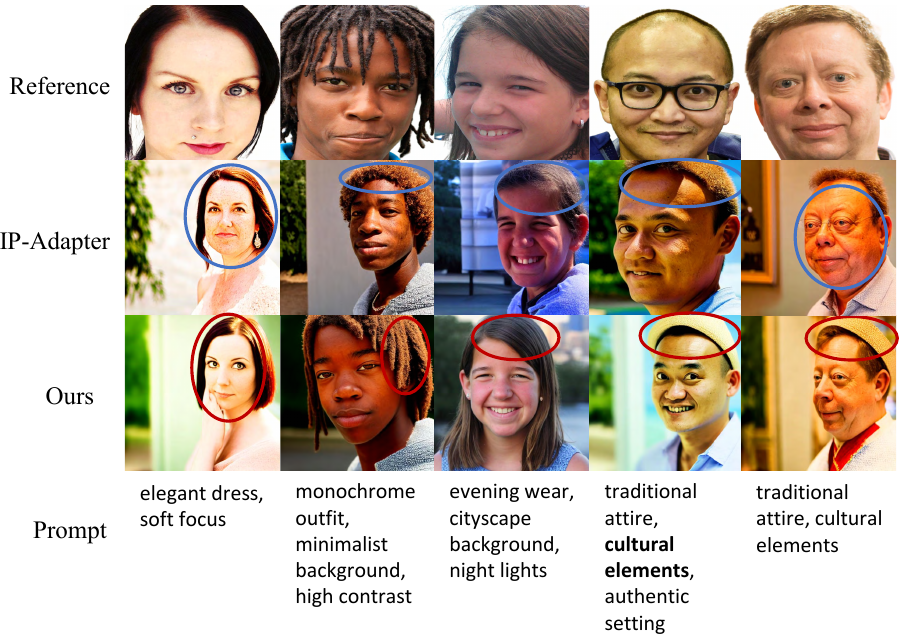}
    \caption{A comparison of the visual outcomes of face personalization. \textcolor{red}{Red} circles highlight well-generated areas, while \textcolor{blue}{blue} circles indicate poorly generated regions. The baseline IP-Adapter tends to alter gender or make faces appear older in columns 1, 2, and 5. In contrast, our method produces faces more similar to the reference in columns 1, 2, 4, 5. Additionally, in columns 3 and 4 of Fig.~\ref{fig:face_visualization}, our model generates cityscape backgrounds and incorporates cultural elements according to the prompts.}
    \label{fig:face_visualization}
\end{figure*}

\begin{figure*}[!t]
    \centering
    \includegraphics[width=\textwidth]{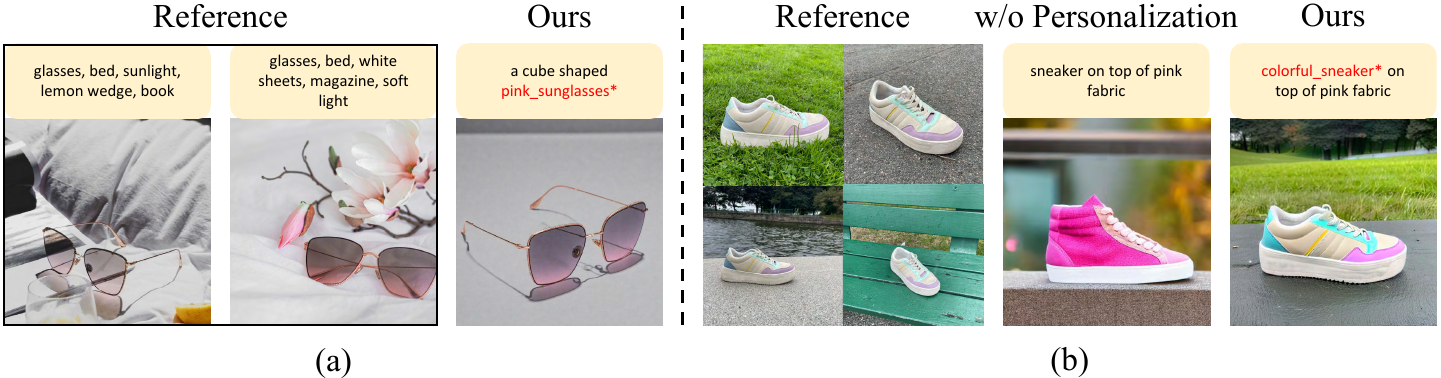}
    \caption{Failure cases of our ACCORD method. (a) Failure to generate cube-shaped sunglasses: When specific attributes are absent from the training prompts, they are omitted from the calculation of DDLoss and PDLoss. In such instances, disentanglement relies solely on the model's generalization ability, which can lead to incomplete or unsuccessful attribute decomposition. (b) Failure to generate the target sneaker on pink fabric: The effectiveness of DDLoss and PDLoss is inherently limited by the capacity of the base model. If the underlying T2I foundation model cannot generate both the personalization target's superclass and another concept simultaneously, it may fail to accurately model the dependence between these concepts. As a result, DDLoss's cross-timestep alignment mechanism may also fail to achieve proper disentanglement.}
    \label{fig:failure_cases}
\end{figure*}

\subsection{Computation and Memory Overhead}
Our proposed ACCORD can be seamlessly incorporated into many existing image personalization methods, enhancing personalization performance at the expense of increased GPU memory usage and longer training times. Tab.~\ref{tab:overhead} summarizes the GPU memory consumption and training duration for each baseline method and its integration with ACCORD, under consistent training settings: all experiments use a batch size of 4 and are trained for 1000 steps on an NVIDIA H100 GPU.

While integrating ACCORD introduces additional GPU memory requirements and slightly longer training times, these increases are moderate and not reach an order-of-magnitude larger compared to the respective baselines. Furthermore, we observe that reducing the batch size has a negligible impact on the performance of ACCORD, enabling users to lower batch size in practical scenarios to achieve acceptable memory usage and training time. Crucially, the extra computational cost imposed by ACCORD remains manageable, especially when contrasted with zero-shot approaches, which often involve considerably higher training overheads and less controllable personalization outcomes at inference time.

\begin{table}[ht]
    \centering
    \begin{tabular}{lcc}
        \toprule
        Method                & GPU Memory (GB) & Training Time (s) \\
        \midrule
        DreamBooth            &    26.5    &     320      \\
        DB w/ Ours            &    45.3    &     480      \\
        CustomDiffusion       &    18.7    &     346      \\
        CD w/ Ours            &    29.5    &     502      \\
        LoRA (SDXL)           &    27.7    &     483      \\
        LoRA (SDXL) w/ Ours   &    60.8    &     916      \\
        VisualEncoder         &    11.1    &     255      \\
        VE w/ Ours            &    16.8    &     490      \\
        \bottomrule
    \end{tabular}
    \caption{Computation and memory overhead for different methods and their integration with ACCORD. All experiments are conducted with batch size 4 and 1000 training steps on H100.}
    \label{tab:overhead}
\end{table}

\subsection{Detailed Dataset Information}
\label{app:detailed_dataset}
We utilize the DreamBench~\cite{dreambooth} dataset to compare the subject-driven personalization capabilities of different methods. DreamBench contains 30 subjects across 15 categories, of which 9 are animals, with each subject having 4-6 images. For style personalization, we employ StyleBench~\cite{styleshot}, which focuses on style transfer tasks and includes 73 distinct styles, each style comprising 5 or more reference images. Furthermore, to validate the effectiveness of our proposed losses for zero-shot image personalization, we conduct face personalization experiments on the FFHQ~\cite{ffhq} dataset. FFHQ is a dataset of 70,000 high-quality face images, offering substantial diversity in age, ethnicity, background, etc. We employ Insightface~\cite{arcface} to detect over 40,000 images containing only a single face, and exclusively use these images for training and testing.

\subsection{More Implementation Details}
\label{app:more_implementation}
The baseline VisualEncoder~\cite{ip-adapter} is a simplified version of IP-Adapter that retains the CLIP Image Encoder-based Visual Encoder, omitting the image-specific Cross Attention. This design implies that only the MLP at the end of the CLIP Image Encoder is trainable, and the personalization relies entirely on the visual embeddings $\rvc_p$ extracted by the visual encoder. We find that it serves as a strong parameter-efficient baseline.
We utilize the official implementation of Facechain-SuDe while implementing other baselines and our proposed method using open-source library Diffusers~\cite{diffusers}. All methods employ the DDIM sampler, a guidance scale of 7.5, and 50 inference steps during evaluation. We conduct experiments on NVIDIA A100 and H100 GPUs.

The different training paradigms of the various baselines necessitate distinct weighting for DDLoss and PDLoss. After tuning the loss weights using validation prompts, we find that, in general, a DDLoss weight between 0.1 and 0.3 suffices, while a PDLoss weight between 0.001 and 0.003 is adequate. We train all methods for 1000 steps on each subject or style and display the results of the best-performing step. It is noteworthy that users can adjust the loss weights in practice to achieve optimal results due to the automatic computation of CLIP-T, BLIP-T, CLIP-I, DINO-I, Gram-D, and Face-Sim.

\subsection{VLM Prompts for Image Captioning}
\label{app:vlm_prompt}
We employ Intern-VL2~\cite{internvl} as the image captioner. The prompt used is detailed below:
\begin{lstlisting}[language=]
You are an excellent prompt engineer. Given an image and a tag corresponding to an important object in the image, please describe the given image in short for the image generation process of the SD model.

Note that the prompt you give should consist of a series of phrases, not a complete sentence, and must contain the tag corresponding to the important object. Please do not describe the important object in detail. Please do not answer anything other than the prompt. The prompt you give needs to use all lowercase letters. Here is an example: 1 dog, running, sea, sunset.

Now, the important objects are:
\end{lstlisting}

\subsection{Limitations}
\label{app:limitations}
Autoregressive generative models without a diffusion process, such as LlamaGen~\cite{llamagen}, are not compatible with the proposed losses. Furthermore, the effectiveness of our decoupling losses is constrained by the capabilities of the foundation T2I model; if the base model cannot accurately represent the relationship between a superclass and a given concept, ACCORD's ability to regularize this dependency is limited. Finally, decoupling is most effective for concepts that are explicitly included in training prompts, while concepts that are implicitly coupled may not be fully disentangled.

\end{document}